\theoremstyle{plain}
\newtheorem{theorem}{Theorem}[section]
\newtheorem{lemma}[theorem]{Lemma}
\theoremstyle{definition}
\theoremstyle{remark}
\newcommand{\D}{\mathcal{D}}
\newcommand{\Dp}{\mathcal{D}_{x}^{\text{aug}}}
\newcommand{\Dm}{\mathcal{D}_{\backslash x}^{\text{aug}}}
\newcommand{\Dadv}{\mathcal{D}_{\backslash x}^{\text{adv}}}
\DeclareMathOperator*{\argmin}{arg\,min}
\DeclareMathOperator*{\E}{\mathbb{E}}
\newcolumntype{?}{!{\vrule width 1.2pt}}
\newcommand{\M}{M}
\icmltitlerunning{Revisiting Contrastive Learning through the Lens of \\Neighborhood Component Analysis: an Integrated Framework}
\begin{document}

\twocolumn[
\icmltitle{Revisiting Contrastive Learning through the Lens of\\ Neighborhood Component Analysis: an Integrated Framework}




\begin{icmlauthorlist}
\icmlauthor{Ching-Yun Ko}{MIT}
\icmlauthor{Jeet Mohapatra}{MIT}
\icmlauthor{Sijia Liu}{MSU}
\icmlauthor{Pin-Yu Chen}{IBM}
\icmlauthor{Luca Daniel}{MIT}
\icmlauthor{Tsui-Wei Weng}{UCSD}
\end{icmlauthorlist}


\icmlaffiliation{MIT}{MIT}
\icmlaffiliation{MSU}{MSU}
\icmlaffiliation{IBM}{IBM Research AI}
\icmlaffiliation{UCSD}{UCSD}

\icmlcorrespondingauthor{Ching-Yun Ko}{cyko@mit.edu}

\icmlkeywords{Machine Learning, ICML}

\vskip 0.3in
]



\printAffiliationsAndNotice{}  

\begin{abstract}
As a seminal tool in self-supervised representation learning, contrastive learning has gained unprecedented attention in recent years. In essence, contrastive learning aims to leverage pairs of positive and negative samples for representation learning, which relates to exploiting neighborhood information in a feature space. By investigating the connection between contrastive learning and neighborhood component analysis (NCA), we provide a novel stochastic nearest neighbor viewpoint of contrastive learning and subsequently propose a series of contrastive losses that outperform the existing ones. Under our proposed framework, we show a new methodology to design integrated contrastive losses that could simultaneously achieve good accuracy and robustness on downstream tasks. With the integrated framework, we achieve up to 6\% improvement on the standard accuracy and 17\% improvement on the robust accuracy.

\end{abstract}

\section{Introduction}
\label{sec:intro}
Contrastive learning has drawn much attention and has become one of the most effective representation learning techniques recently. The contrastive paradigm~\cite{oord2018representation,wu2018unsupervised,He2020Momentum,chen2020simple,chuang2020debiased,Grill2020Bootstrap} constructs an objective for embeddings based on an assumed semantic similarity between positive pairs and dissimlarity between negative pairs, which stems from instance-level classification~\cite{dosovitskiy2015discriminative,bojanowski2017unsupervised,wu2018unsupervised}.
Specifically, the contrastive loss $\mathcal{L}_{\text{CL}}$ \cite{oord2018representation,chen2020simple} is defined as
$\E_{\substack{x\sim \D, \\x^+\sim \D_{x}^+, \\x_i^-\sim \D_{x}^-}} \small\left[-\log\frac{e^{f(x)^Tf(x^+)}}{e^{f(x)^Tf(x^+)}+\sum\limits_{i=1}\limits^N e^{f(x)^Tf(x^-_i)}}\right]$
where, for an input data sample $x$, $(x,x^+)$ denotes a positive pair and $(x,x^-)$ denotes a negative pair. 
The function $f$ is an encoder parameterized by a neural network and the number of negative pairs $N$ is typically treated as a hyperparameter. Note that the contrastive loss can encode the inputs and keys by different encoders if one considers the use of memory bank or momentum contrast~\cite{wu2018unsupervised,He2020Momentum,Chen2020Improved}. In this work, we will focus on the paradigm proposed in \cite{wang2015unsupervised,ye2019unsupervised,chen2020simple} which has demonstrated competitive results in representation learning. 
\begin{figure*}[t!]
    \centering
    \begin{subfigure}[CIFAR100]{
    \includegraphics[width=0.48\textwidth]{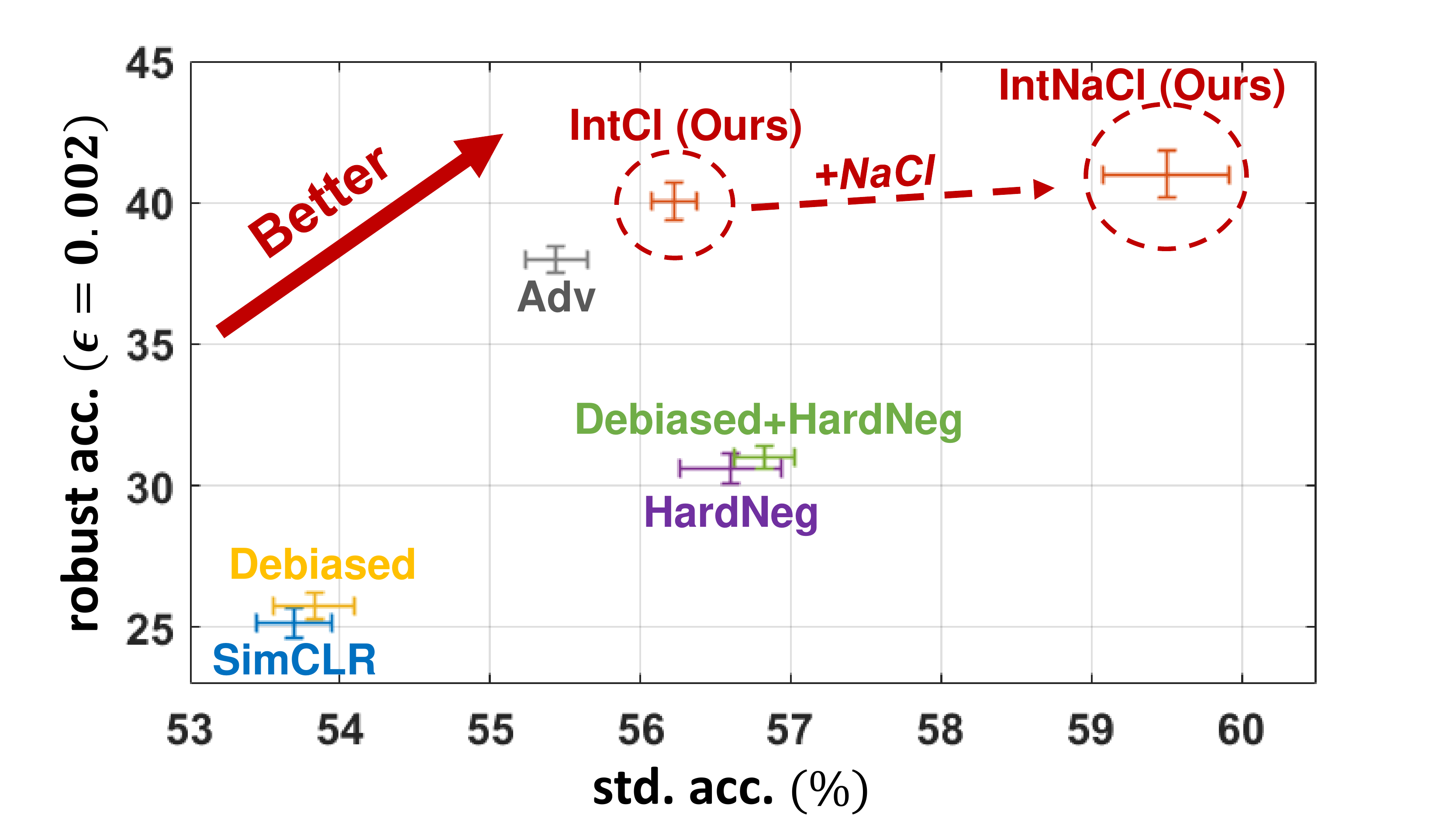}
    \label{subfig:cifar100_2d}}
    \end{subfigure}
    \begin{subfigure}[CIFAR10]{
    \includegraphics[width=0.48\textwidth]{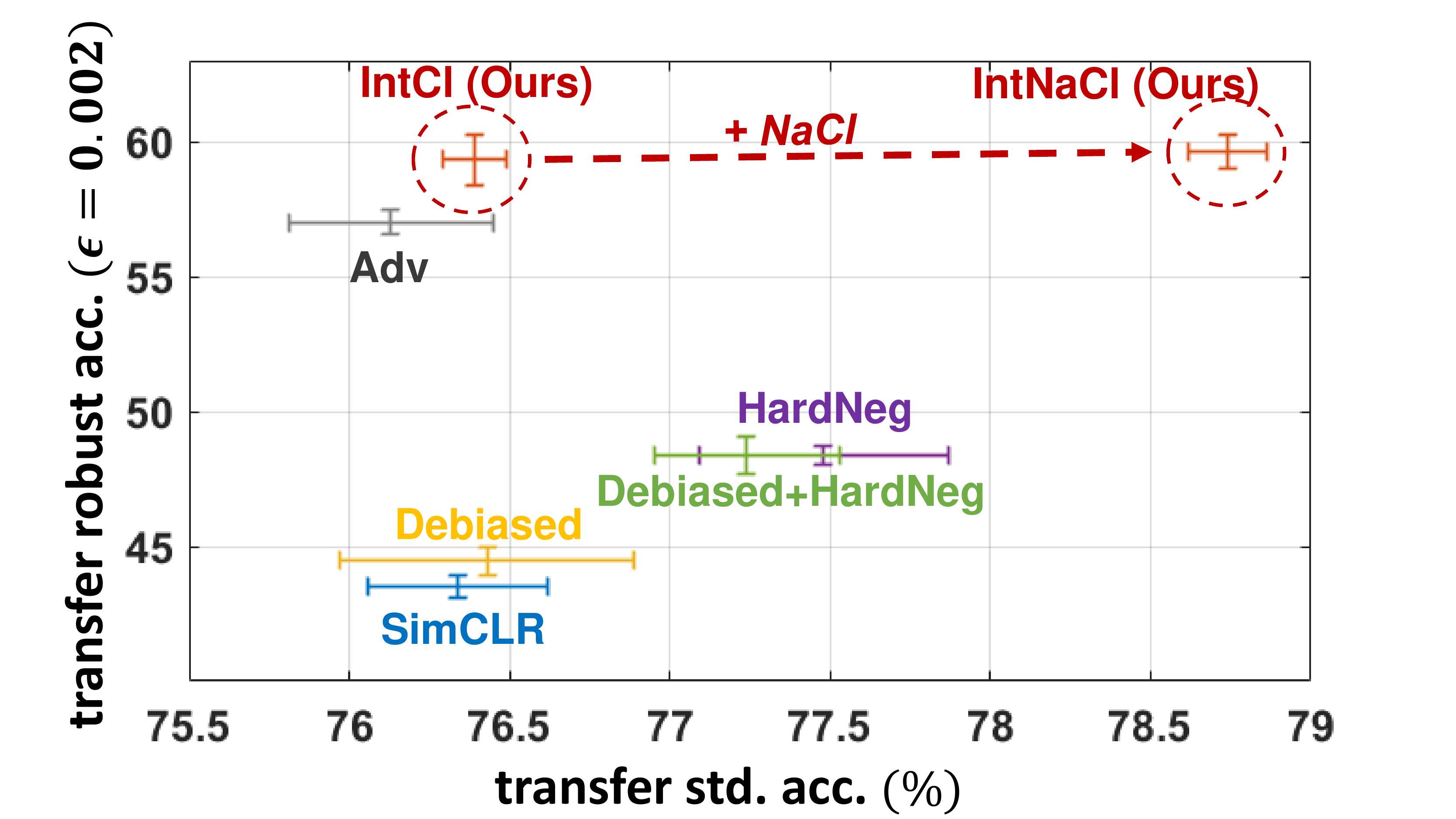}
    \label{subfig:cifar10_2d}}
    \end{subfigure}
    \caption{The performance of existing methods and our proposal (IntNaCl \& IntCl) in terms of their standard accuracy (x-axis) and robust accuracy under FGSM attacks $\epsilon=0.002$ (y-axis). The transfer performance refers to fine-tuning a linear layer for CIFAR10 with representation networks trained on CIFAR100.}
    \label{fig:cifars}
\end{figure*}

When constructing loss $\mathcal{L}_{\text{CL}}$, ideally, one draws $x^+$ from the data distribution $\D_{x}^+$ that characterizes the semantically-\textit{similar} (i.e., \textit{positive}) samples to $x$; similarly, one wants to draw $x^-$ from $\D_{x}^-$ that characterizes the semantically-\textit{dissimilar} (\textit{negative}) samples. 
However, the definition of semantically-\textit{similar} and semantically-\textit{dissimilar} is heavily contingent on downstream tasks: an image of a cat can be considered semantically similar to that of a dog if the downstream task is to distinguish between animal and non-animal classes. Without the knowledge of downstream tasks, $\D_{x}^+$ and $\D_{x}^-$ are hard to define. 
To provide a surrogate of measuring similarity, current mainstream contrastive learning algorithms~\cite{He2020Momentum,chen2020simple,Chen2020Improved,Grill2020Bootstrap} typically build up $\D_{x}^+$ by considering data augmentation $\Dp$ of a data sample $x$. In the meantime, $\D_{x}^-$ is approximated by the joint distribution $\D$ or $\Dm:=\cup_{x'\in\D\backslash\{x\}}\D_{x'}^{\text{aug}}$, and the resulting contrastive loss is known as $\mathcal{L}_{\text{SimCLR}}$ which was proposed in~\cite{chen2020simple}: 
\begin{align}
\label{eqn:biased}
    & (\text{SimCLR loss }\mathcal{L}_{\text{SimCLR}}) \nonumber\\
    & \E_{\substack{x\sim \D, \\x^+\sim \Dp, \\x_i^-\sim \Dm}} \left[-\log\frac{e^{f(x)^Tf(x^+)}}{e^{f(x)^Tf(x^+)}+\sum\limits_{i=1}\limits^N e^{f(x)^Tf(x^-_i)}}\right].
\end{align}
Although this formulation seems to put no assumptions on the downstream task classes, we find that there are in fact implicit assumptions on the class probability prior of the downstream tasks. Specifically, we formally establish the connection between the Neighborhood Component Analysis (NCA) and the unsupervised contrastive learning in this paper for the first time (to our best knowledge). Inspired by this interesting relationship to NCA, we further propose two new contrastive loss (named NaCl) which outperform existing paradigm. Furthermore, by inspecting the robust accuracy of several existing methods (e.g., Figure~\ref{fig:cifars}'s y-axis, the classification accuracy when inputs are corrupted by crafted perturbations), one can see the insufficiency of existing methods in addressing robustness. Thus, we propose a new integrated contrastive framework (named IntNacl and IntCl) that accounts for \textit{both} the standard accuracy and adversarial cases: our proposed method's performance remains in the desired upper-right region (circled) as shown in Figure~\ref{fig:cifars}. A conceptual illustration of our proposals is given in Figure~\ref{fig:concept}.

We summarize our main contributions as follows: 
\begin{itemize}
    \item We establish the relationship between contrastive learning and NCA, and propose two new contrastive loss dubbed \textbf{NaCl} (Neighborhood analysis Contrastive loss). We provide theoretical analysis on NaCl and show better generalization bounds over the baselines;
    \item Building on top of NaCl, we propose a generic framework called Integrated contrastive learning (\textbf{IntCl} and \textbf{IntNaCl}) where we show that the spectrum of recently-proposed contrastive learning losses~\cite{chuang2020debiased,robinson2021contrastive,ho2020contrastive} can be included as special cases of our framework;
    \item We provide extensive experiments that demonstrate the effectiveness of IntNaCl in improving standard accuracy and robust accuracy. Specifically, IntNaCl improves upon literature~\cite{chen2020simple,chuang2020debiased,robinson2021contrastive,ho2020contrastive} by 3-6\% and 4-16\% in CIFAR100 standard and robust accuracy, and 2-3\% and 3-17\% in CIFAR10 standard and robust accuracy, respectively. 
\end{itemize}

\begin{figure}
    \centering
    \includegraphics[width=0.49\textwidth]{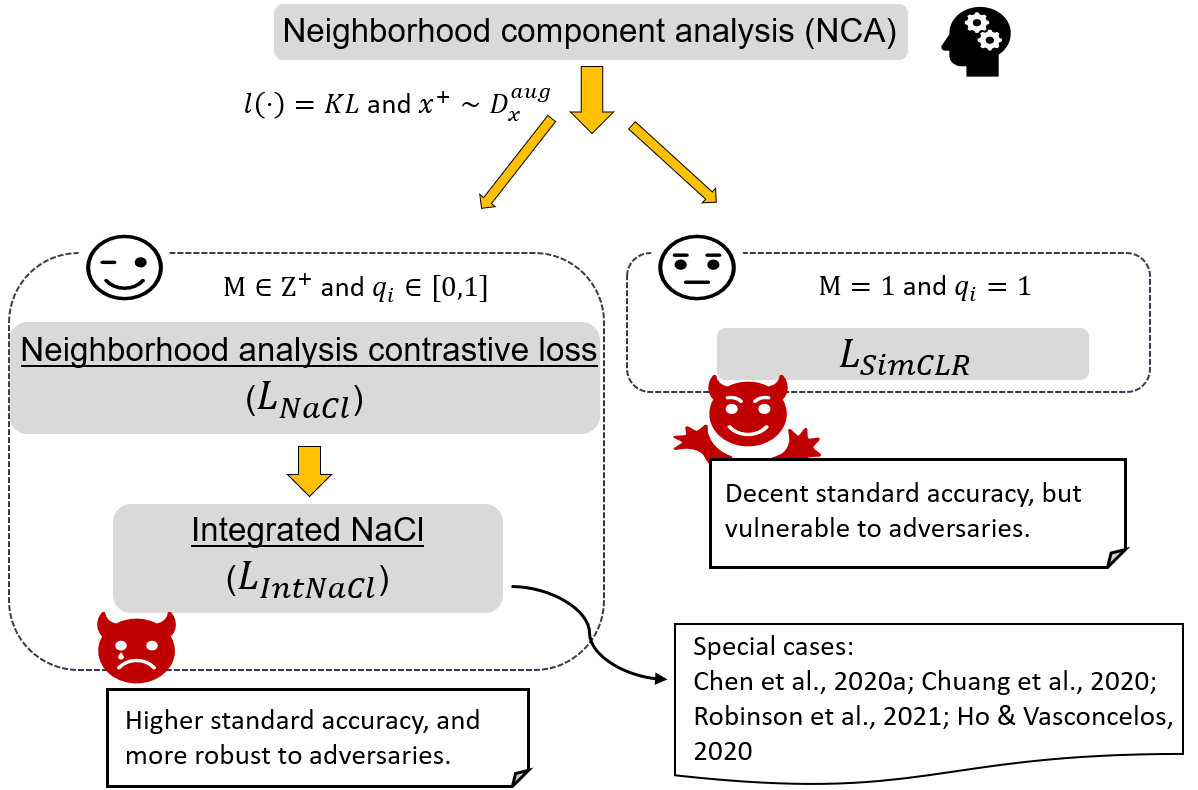}
    \caption{A conceptual illustration of the relationships among NCA, $\mathcal{L}_{\text{SimCLR}}$, and our proposals.}
    \label{fig:concept}
\end{figure}

\section{Related Work}
\label{sec:related}


\paragraph{Contrastive learning.} In the early work of \cite{dosovitskiy2015discriminative}, authors treat every individual image in a dataset as belonging to its own class and do multi-class classification tasks under the setting. However, this regime will soon become intractable as the size of dataset increases. 
To cope with this,~\cite{wu2018unsupervised} designs a memory bank for storing seen representations (keys) and utilize noise contrastive estimation~\cite{gutmann2010noise,Mnih2012Fast,jozefowicz2016exploring,oord2018representation} for representation comparisons. 
\cite{He2020Momentum} and \cite{Chen2020Improved} further improve upon \cite{wu2018unsupervised} by storing keys inferred from a momentum encoder other than the representation encoder for $x$.
To further reduce the computational cost, besides the practical tricks introduced in SimCLR~\cite{chen2020simple} (e.g. stronger data augmentation scheme and projector heads), authors of SimCLR get rid of the memory bank and instead makes use of other samples from the same batch to form contrastive pairs. 

In the rest of this paper, we will focus on the setups of SimCLR and the related follow up work~\cite{chuang2020debiased,robinson2021contrastive,ho2020contrastive} due to computational efficiency. A temperature scaling hyperparameter $t$ is normally used in contrastive learning to tune the radius of the hypersphere that representations lie in. For better readability, without loss of generality, we let $t = 1$ in all equations. We let $g_0(x,\{x_i^-\}^N)$ denote the negative term $\frac{1}{N}\sum_{i=1}^N e^{f(x)^Tf(x^-_i)}$, where the subscript $i$ identifies the summation index and the superscript $N$ identifies the summation limits. We omit the subscript $i$ when the sample index is one dimensional (e.g. $x^-_i$ has 1-D index, $x^-_{ij}$ has 2-D index). 
Then $\mathcal{L}_{\text{SimCLR}}$ in Equation \eqref{eqn:biased} can be re-written as 
\begin{align}
\label{eqn:simclr}
    & (\text{Re-written SimCLR loss }\mathcal{L}_{\text{SimCLR}}) \nonumber\\
    & \E_{\substack{x\sim \D, \\x^+\sim \Dp, \\x_i^-\sim \Dm}}
    \left[-\log\frac{e^{f(x)^Tf(x^+)}}{e^{f(x)^Tf(x^+)}+N  g_0(x,\{x_i^-\}^N)}\right].
\end{align}


\paragraph{Designing \textit{negative} pairs in contrastive learning.} Several works~\cite{saunshi2019theoretical,chuang2020debiased} have come to the awareness of the sampling bias of negative pairs in Equation~\ref{eqn:simclr}. Specifically, if the negative samples are sampled from $\D$, we will receive with $1/K$ probability a positive sample in a $K$-class classification task with balanced classes, hence biasing the contrastive loss.
To overcome this issue, \cite{chuang2020debiased} proposes a \textit{de-biased} constrastive loss to mitigate the sampling bias by explicitly including the class probability prior on the downstream tasks (e.g., with probability $0.1$, $x_i^-$ contains a positive example in CIFAR10), and tune the prior $\tau^+$ as a hyperparameter. We denote the loss from \cite{chuang2020debiased} as $\mathcal{L}_{\text{Debiased}}$ and the full equation is shown below:
\begingroup
\begin{align}
    & (\text{Debiased loss }\mathcal{L}_{\text{Debiased}}) \nonumber\\
    & \E_{\substack{x\sim \D, \\x^+\sim \Dp, \\v_j\sim \Dp, \\u_i\sim \Dm}}
    \left[-\log\frac{e^{f(x)^Tf(x^+)}}{e^{f(x)^Tf(x^+)}+N  g_1(x,\{u_i\}^n,\{v_j\}^m)}\right],
    \label{eqn:debiased}
\end{align}
\endgroup
where the estimator $g_1(x,\{u_i\}^n,\{v_j\}^m)$ is defined by 
$\max\{\frac{\sum_{i=1}^n e^{f(x)^Tf(u_i)}}{(1-\tau^+)n}- \frac{\tau^+\sum_{j=1}^m e^{f(x)^Tf(v_j)}}{(1-\tau^+)m},e^{-1/t}\}$
and $n$ and $m$ represents the numbers of sampled points in $\Dm$ and $\Dp$ for the re-weighted negative term, $\tau^+$ is the class probability prior,
and $t$ is the temperature hyperparameter. Recently,~\cite{robinson2021contrastive} proposes to weigh sample pairs through the cosine distance in the estimator $g_1(x,\{u_i\}^n,\{v_j\}^m)$ based on $\mathcal{L}_{\text{Debiased}}$, and we denote their approach as $\mathcal{L}_{\text{Debiased+HardNeg}}$, 
\begingroup
\begin{align} 
    & (\text{Debiased+HardNeg loss }\mathcal{L}_{\text{Debiased+HardNeg}}) \nonumber\\
    & \E_{\substack{x\sim \D, \\x^+\sim \Dp, \\v_j\sim \Dp, \\u_i\sim \Dm}}
    \left[-\log\frac{e^{f(x)^Tf(x^+)}}{e^{f(x)^Tf(x^+)}+N  g_2(x,\{u_i\}^n,\{v_j\}^m)}\right],
    \label{eqn:harddebiased}
\end{align}
\endgroup
where the estimator $g_2(x,\{u_i\}^n,\{v_j\}^m)$ is defined by 
$\max\{\frac{\sum_{i=1}^n \kappa_i^{\beta+1}}{(1-\tau^+)\sum_{i=1}^n \kappa_i^\beta}-\frac{\tau^+\sum_{j=1}^m e^{f(x)^Tf(v_j)}}{(1-\tau^+)m},e^{-1/t}\}$
and $\kappa_i =  e^{f(x)^Tf(u_i)}$. A typical choice of $n$ and $m$ are $n=N$ and $m=1$, and the hyperparameter $\tau^+$ in $g_2$ is exactly the same as that in $g_1$ whereas the hyperparameter $\beta$ controls the weighting mechanism. Specifically, when $\tau^+=0$, we denote $\mathcal{L}_{\text{Debiased+HardNeg}}$ as $\mathcal{L}_{\text{HardNeg}}$; when $\beta=0$, Equation \eqref{eqn:harddebiased} degenerates to Equation \eqref{eqn:debiased} which is $\mathcal{L}_{\text{Debiased}}$. 

\paragraph{Designing \textit{positive} pairs in contrastive learning.}
Instead of modifying the negative pairs,
another direction is to design the 
positive pairs \cite{ho2020contrastive,kim2020adversarial}. Specifically, authors of \cite{ho2020contrastive} define the concept of \textit{adversarial examples} in the regime of representation learning as the positive sample $x^{\text{adv}}$ that maximizes $\mathcal{L}_{\text{SimCLR}}$ in Equation \eqref{eqn:simclr} within a pre-specified perturbation magnitude $\epsilon$. The resulting loss function is denoted as $\mathcal{L}_{\text{Adv}}$:
\begin{align}
    & (\text{Adversarial loss }\mathcal{L}_{\text{Adv}}) \nonumber\\
    & \E_{\substack{x\sim \D, \\x^+\sim \Dp, \\x_{i_1}^-\sim \Dm, \\x_{i_2}^-\sim \Dadv}}
    \left[-\log\frac{e^{f(x)^Tf(x^+)}}{e^{f(x)^Tf(x^+)}+N  g_0(x,\{x_{i_1}^-\}^N)}\right. \nonumber\\
    \;\;\; &~\  \left. - \alpha \log\frac{e^{f(x)^Tf(x^{\text{adv}})}}{e^{f(x)^Tf(x^{\text{adv}})}+N  g_0(x,\{x_{i_2}^-\}^N)}\right],
    \label{eqn:adversarial}
\end{align}
where the $\Dadv$ is defined by $\cup_{x'\in\D\backslash\{x\}} x'\cup x'^{,\text{adv}}$.
Notably, 
one can adjust the importance of the adversarial term by tuning $\alpha$ in Equation \eqref{eqn:adversarial}. 

\paragraph{Adversarial Robustness.}
Despite neural networks' supremacy in achieving impressive performance, they have been proved vulnerable to human-imperceptible perturbations~\cite{goodfellow2014explaining,szegedy2014intriguing,nguyen2015deep,moosavi2016deepfool}. In the supervised learning setting, an adversarial perturbation $\delta$ is defined to render inconsistent classification result of the input $x$: $r(x+\delta)\neq r(x)$, where $r$ is a neural network classifier. A stronger adversarial attack means it can find $\delta$ with higher success attack rate under the same $\epsilon$-budget ($\|\delta\|_p\leq\epsilon$). One of the most popular and classical attack algorithms is FGSM~\cite{goodfellow2014explaining}, where with a fixed perturbation magnitude $\epsilon$, FGSM finds adversarial perturbation by 1-step gradient descent.
Another popular attack method we consider in this paper is PGD~\cite{madry2018towards}, which assembles the iterative-FGSM~\cite{kurakin2016adversarial} but with different initializations and learning rate constraints.

\section{Two New NCA-inspired Contrastive Losses and an Integrated Framework}
\label{sec:methodology0}
In this section, we first derive a connection between Neighborhood Component Analysis (NCA) \cite{goldberger2004neighbourhood} and the unsupervised contrastive learning loss in Section~\ref{subsec:nca}. Inspired by our result in Section~\ref{subsec:nca}, we propose two new NCA-inspired contrastive losses in Section~\ref{subsec:NaCl}, which we refer to as \textbf{N}eighborhood \textbf{a}nalysis  \textbf{C}ontrastive \textbf{l}oss (\textbf{NaCl}). 
To address a lack of robustness in existing contrastive losses, in Section~\ref{sec:tricks}, we propose a useful framework \textbf{IntNaCl} that integrates NaCl and a robustness-promoting loss. A summary of definitions is given as Table~\ref{tbl:summary_2}.


\subsection{Bridging from supervised NCA to unsupervised contrastive learning: a new finding}
\label{subsec:nca}
NCA is a supervised learning algorithm concerned with learning a quadratic distance metric with the matrix $A$ such that the performance of nearest neighbour classification is maximized. 
Notice that the set of neighbors for a data point is a function of transformation $A$. However, it can remain unchanged as $A$ changes 
within a certain range. Therefore the leave-one-out classification performance can be a piecewise-constant function of $A$ and hence non-differentiable.
To overcome this, the optimization problem is generally given using the concept of stochastic nearest neighbors. In the stochastic nearest neighbor setting, nearest neighbor selection is regarded as a random event, where the probability that point $x_j$ is selected as the nearest neighbor for $x_i$ is given as $p(x_j\mid x_i)$ with
\begin{align}
    p_{ij} := p(x_j\mid x_i) &= \frac{e^{-\norm{Ax_i - Ax_j}^2}}{\sum_{k\neq i}e^{-\norm{Ax_i - Ax_k}^2}},~  j\neq i.\label{eqn:pij}
\end{align}

Let $c_i$ denote the label of $x_i$, in the leave-one-out classification loss, the probability a point is classified correctly is given as $p_i = \sum_{j\mid c_j = c_i} p_{ij}$, where $\{j\mid c_j = c_i\}$ defines an index set in which all points $x_j$ belong to the same class as point $x_i$. We use $M$ to denote the cardinality of this set. By the definition of $c_i$, the probability $x_i$'s label is $c_i$ is given as $q_i$, which is exactly 1\footnote{For every data point, $p$ and $q$ are defined differently with their supports being the class index. For every sample $x$, $q_i$ is the ground truth probability of class labels and $p_i$ is the prediction probability.}. Thus the optimization problem can be written as $\min_A \sum_{i=1}^n \ell(q_i, \sum_{j \mid c_j = c_i} p_{ij})$.
This learning objective then naturally maximizes the expected accuracy of a 1-nearest neighbor classifier. Two popular choices for $\ell(\cdot)$ are the total variation distance and the KL divergence. In the seminal paper of~\cite{goldberger2004neighbourhood}, the authors showed both losses give similar results, thus we will focus on the KL divergence loss in this work.
For $\ell(\cdot) = $ KL, the relative entropy from $p$ to $q$ is $D_{\text{KL}}(q\|p)=\sum_i- q_i\log\frac{p_i}{q_i}=\sum_i -\log p_i$ when $q_i=1$. By plugging in the definition of $p_i = \sum_{j\mid c_j = c_i} p_{ij}$ and Equation~\ref{eqn:pij}, the NCA problem becomes
\begin{align}
\label{eqn:NCA}
    \min_A \sum_{i=1}^n - \log\left(\sum_{j \mid c_j = c_i} \frac{e^{-\norm{Ax_i - Ax_j}^2}}{\sum_{k\neq i}e^{-\norm{Ax_i - Ax_k}^2}}\right).
\end{align}
With the above formulation, we now show how to establish the connection of NCA to the contrastive learning loss. First, by assuming (a) positive pairs belong to the same class and (b) the transformation $Ax$ is instead parametrized by a general function $\frac{f(x)}{\sqrt{2}} := \frac{h(x)}{\sqrt{2}\norm{h(x)}}$, where $h$ is a neural network, we could derive from Equation \eqref{eqn:NCA} to Equation~\eqref{eqn:start} in Appendix A. Next, we show that with some manipulations (details please see Appendix A), below Equation~\eqref{eqn:end} is equivalent to Equation~\eqref{eqn:start}:
\begin{align}
    \min_f \!\! \E_{x\sim \D}&
    \left[- \log\left(\frac{\sum\limits_{j=1}\limits^\M e^{f(x)^Tf(x_{j}^+)}}{\sum\limits_{j=1}\limits^\M e^{f(x)^Tf(x_j^+)} +N  g_0(x,\{x_i^-\}^N)}\right)\right]\!\!.
    \label{eqn:end}
\end{align}
Notice that Equation~\eqref{eqn:end} is a more general contrastive loss where the contrastive loss $\mathcal{L}_{\text{SimCLR}}$ in~\cite{chen2020simple} is a special case with $\M=1, x^+\sim \Dp$:
\begin{align*}
    \min_f \!\! \E_{\substack{x\sim \D, \\x^+\sim \Dp, \\x_i^-\sim \Dm}}
    \left[- \log\left(\frac{ e^{f(x)^Tf(x^+)}}{ e^{f(x)^Tf(x^+)} +N  g_0(x,\{x_i^-\}^N)}\right)\right].
\end{align*}
With the above analysis, two new contrastive losses are proposed based on Equation~\eqref{eqn:end} in the next Section~\ref{subsec:NaCl}. As a side note, as the computation of the loss grows quadratically with the size of the dataset, the current method~\cite{chen2020simple} uses mini batches to construct positive/negative pairs in a data batch of size $N$ to estimate the loss. 

\subsection{Neighborhood analysis Contrastive loss (NaCl)}
\label{subsec:NaCl}
Based on the connection we have built in Section~\ref{subsec:nca}, we discover that the reduction from the NCA formulation to $\mathcal{L}_{\text{SimCLR}}$ assumes
\begin{enumerate}
    \item the expected relative density of positives in the underlying data distribution is $1/N$;
    \item the probability $q_i$ induced by encoder network $f$ is 1.
\end{enumerate}
By relaxing the assumptions individually, in this section, we propose two new contrastive losses. Note that the two neighborhood analysis contrastive losses are designed from orthogonal perspectives, hence they are complementary to each other. We use $\mathcal{L}_{\text{NaCl}}$ to denote these two variant losses: $\mathcal{L}_{\text{NCA}}$ and $\mathcal{L}_{\text{MIXNCA}}$.

\paragraph{(I) Relaxing assumption 1: $\mathcal{L}_{\text{NCA}}$.}
When relating unsupervised SimCLR to supervised NCA, we view two samples in a positive pair as same-class samples. Since in SimCLR, the number of positive pairs $M = 1$, which means that $\{j\mid c_j = c_i\}$ only contains one element. This implies the relative density of positives in the underlying data distribution is $\M/N=1/N$, where $N$ is the data batch size. However, as the expected relative density is task-dependent, it's more reasonable to treat the $\M/N$ ratio as a hyperparameter similar to the class probabilities $\tau^+$ introduced by \cite{chuang2020debiased}. Therefore, we propose the more general contrastive loss $\mathcal{L}_{\text{NCA}}$ which could include more than one element or equivalently $\M\neq 1$:
\begin{align*}
    & (\text{NCA loss }\mathcal{L}_{\text{NCA}}(G=g_0,\M)) \nonumber\\
    &\E_{\substack{x\sim \D, \\x_j^+\sim \Dp, \\x_i^-\sim \Dm}}
    \left[-\log\frac{\sum\limits_{j=1}\limits^\M e^{f(x)^Tf(x_j^+)}}{\sum\limits_{j=1}\limits^\M e^{f(x)^Tf(x_j^+)}+N  g_0(x,\{x_{i}^-\}^N)}\right].
\end{align*}
We further provide the generalization results as follows: if we let $\mathcal{F}$ be a function class, $K$ be the number of classes, $\mathcal{L}_{\textnormal{Sup}}$ be the cross entropy loss of any downstream K-class classification task, $\widehat{\mathcal{L}}_{\substack{\textnormal{NCA}}}(g_0,\M)$ be the empirical NCA loss, $T$ be the size of the dataset, and $\mathcal{R}_\mathcal{S}(\mathcal{F})$ be the empirical Rademacher complexity of $\mathcal{F}$ w.r.t. data sample $\mathcal{S}$, then 
\begin{theorem}
\label{thm:generalization}
With probability at least $1-\delta$, for any $f \in \mathcal{F}$ and $N \geq K-1$,
\begin{align*}
      \mathcal{L}_{\textnormal{Sup}}(\hat{f}) &\leq \mathcal{L}_{\substack{\textnormal{NCA}}}(g_0,\M)(f) \\
      &+ \mathcal{O} \left (\sqrt{\frac{1}{N}} +  \frac{\lambda \mathcal{R}_\mathcal{S}(\mathcal{F})}{T}+  B\sqrt{\frac{\log{\frac{1}{\delta}}}{T}} \right ),
\end{align*}
where $\hat{f}=\argmin_{f \in \mathcal{F}} \widehat{\mathcal{L}}_{\textnormal{NCA}}(g_0,\M)(f)$, $\lambda = \frac{1}{\M}$, and $B = \log N$. 
\end{theorem}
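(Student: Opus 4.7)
My plan is to follow the generalization-bound template for contrastive representation learning pioneered by Arora et al.\ and subsequently adapted by Chuang et al., specializing it to the $M$-positive aggregated numerator that distinguishes $\mathcal{L}_{\text{NCA}}$ from $\mathcal{L}_{\text{SimCLR}}$. I would decompose the bound into three steps: (i) upper-bound the downstream supervised risk $\mathcal{L}_{\text{Sup}}(\hat{f})$ by the \emph{population} NCA risk $\mathcal{L}_{\text{NCA}}(g_0,\M)(f)$ at an arbitrary reference $f$; (ii) exchange the population risk for the empirical one via uniform convergence, combined with the empirical optimality of $\hat{f}=\argmin_{f\in\mathcal{F}} \widehat{\mathcal{L}}_{\text{NCA}}(g_0,\M)(f)$; (iii) absorb the finite-$N$ sampling-bias term. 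The aggregated positive numerator does not alter the overall structure of the argument; its main effect is to reduce the Lipschitz constant of the loss with respect to the inner products $f(x)^T f(x_j^+)$, which ultimately produces the $\lambda = 1/\M$ factor in front of the Rademacher term.

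\textbf{Step 1 (from Sup to NCA).} For any $f$ with unit-norm output, I would construct a mean classifier whose K-class cross-entropy risk is bounded by $\mathcal{L}_{\text{NCA}}(g_0,\M)(f)$ plus a correction of order $\sqrt{1/N}$. Following the standard trick, I take the weight vector for class $c$ to be $\mu_c := \mathbb{E}_{x'\mid c}[f(x')]$; the averaged numerator $\tfrac{1}{\M}\sum_{j=1}^{\M} e^{f(x)^T f(x_j^+)}$ then concentrates around $e^{f(x)^T \mu_{c(x)}}$ by Jensen and bounded-feature concentration, while across $N$ negative draws every other class is covered in expectation once $N \geq K-1$. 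A Hoeffding argument on the normalized inner products (which lie in $[-1,1]$) yields the $\mathcal{O}(\sqrt{1/N})$ gap that arises from collisions between the negative pool and the true class, after which plugging the mean classifier into the supervised log-softmax loss gives the desired majorization.

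\textbf{Step 2 (uniform convergence).} Next I would bound $\sup_{f\in\mathcal{F}} |\mathcal{L}_{\text{NCA}}(g_0,\M)(f) - \widehat{\mathcal{L}}_{\text{NCA}}(g_0,\M)(f)|$ via Rademacher complexity. With unit-norm features, $f(x)^T f(y) \in [-1,1]$, so each sample's loss lies in $[0,\mathcal{O}(\log N)]$, which gives the stated $B=\log N$ and produces the concentration term $B\sqrt{\log(1/\delta)/T}$ via McDiarmid. A direct log-sum-exp calculation shows that the gradient of $\mathcal{L}_{\text{NCA}}$ with respect to the vector of $\M$ positive inner products has entries bounded by $1/\M$; Talagrand's contraction lemma then transfers this to $\mathcal{R}_\mathcal{S}(\mathcal{F})$ with scaling $1/\M$, producing the $\lambda\,\mathcal{R}_\mathcal{S}(\mathcal{F})/T$ term. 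Applying the same bound symmetrically on both sides of $\widehat{\mathcal{L}}_{\text{NCA}}(\hat{f}) \leq \widehat{\mathcal{L}}_{\text{NCA}}(f)$ closes the chain.

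\textbf{Main obstacle.} I expect Step 1 to be the most delicate piece. One must verify that the $\M$-positive aggregated-numerator formulation still admits a Jensen-style step that exhibits a valid mean classifier from $f$, and carefully track how the collision probability between the positive block and the $N$ negatives interacts with both $\M$ and the $N \geq K-1$ requirement. Confirming that the $\sqrt{1/N}$ scaling is independent of $\M$, and that the hidden constants do not degrade as $\M$ grows, is where I would spend most of the effort; the Rademacher and concentration computations in Step 2 are largely mechanical given the bounded normalized feature map.
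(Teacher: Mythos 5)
Your proposal follows essentially the same route as the paper's Appendix B: the mean-classifier/Jensen argument giving $\mathcal{L}_{\textnormal{Sup}}(f) \leq \widetilde{L}^{\M,N}_{\text{Unbiased}}(f)$ under $N\geq K-1$, a Hoeffding-type concentration of the $N$-sample negative estimator around $\mathbb{E}_{x^-}e^{f(x)^Tf(x^-)}$ for the $\sqrt{1/N}$ term, and a Rademacher/contraction step in which the aggregated denominator $\sum_{j=1}^{\M}e^{f(x)^Tf(x_j^+)}\geq \M e^{-1}$ yields the $1/\M$ Lipschitz factor $\lambda$ and the $B=\log N$ range bound, chained together exactly as you describe via the empirical optimality of $\hat{f}$. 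The one quibble is your gloss that the $\sqrt{1/N}$ gap ``arises from collisions between the negative pool and the true class'' --- in the paper that term is purely the Monte Carlo error of the finite-sample negative estimator (class collisions produce a non-vanishing $\tau^+$ bias handled separately by the debiased estimator $g_1$, not a $\sqrt{1/N}$ term) --- but the tool you actually invoke, Hoeffding on the $N$ bounded inner products, is precisely the paper's Lemma on $\mathbb{P}(\Delta\geq\varepsilon)$, so the argument goes through as in the paper.
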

We can see from the term $\lambda$ that $\mathcal{L}_{\text{NCA}}(G=g_0,\M)$ improves upon $\mathcal{L}_{\text{SimCLR}}$ by using a $\M\neq1$. The result extends to $G=g_1$ and for more details please refer to Appendix B.

\paragraph{(II) Relaxing assumption 2: $\mathcal{L}_{\text{MIXNCA}}$.}
To reduce the reliance on the downstream prior, a practical relaxation can be made by allowing neighborhood samples to agree with each other with probability. This translates into relaxing the specification of $q_i=1$ and consider a synthetic data point $x' = \lambda x_i+(1-\lambda) y, y \sim \mathcal{D}$ that belongs to a synthetic class $c_{\lambda, i}$. Assume the probability $x_i$'s label is $c_{\lambda, i}$ is $q_{\lambda,i}=\lambda + (1-\lambda) [c_y = c_i]$, then $q_{\lambda,i}$ should match the probability $p_{\lambda,i} = \sum_{j\mid c_j = c_{\lambda,i}} p_{ij}$, where $\{j\mid c_j = c_{\lambda,i}\}$ is a singleton containing only the index of $x'$, which yields
\begin{align*}
    & (\text{MIXNCA loss }\mathcal{L}_{\text{MIXNCA}}(G=g_0,\M,\lambda)) \nonumber\\
    & \E_{\substack{x\sim \D, \\x^+\sim \Dp, \\x_{i_1}^-, x_{i_2j}^-, x_{j}^-\sim \Dm}}
    \left[-\log\frac{e^{f(x)^Tf(x^+)}}{e^{f(x)^Tf(x^+)}+N  g_0(x,\{x_{i_1}^-\}^N)}\right.
    \nonumber\\
    & - \frac{\lambda}{\M-1}\sum\limits_{j=1}\limits^{\M-1}
    \log \Omega_j - \left.\frac{1-\lambda}{\M-1}\sum\limits_{j=1}\limits^{\M-1}
    \log(1-\Omega_j)\right], 
\end{align*}
where 
$\Omega_j = \frac{e^{f(x)^Tf(\lambda x^+ +(1-\lambda)x^-_{j})}}{e^{f(x)^Tf(\lambda x^++(1-\lambda)x^-_{j})}+N  g_0(x,\{x_{i_2 j}^-\}^N_{i_2})}.$
Interestingly, the construction of $x'$ herein assembles the mixup~\cite{zhang2018mixup} philosophy in supervised learning. 
Recent work \cite{lee2021imix,verma2021towards} have also considered augment the dataset by including synthetic data point and build domain-agnostic contrastive learning strategies, however, their loss is different from this work because they apply mixup on the data points $x$ while we use mixup to produce diverse postivie pairs. 


\begin{table*}[t!]
\centering
\caption{The relationship between IntNaCl framework and the literature: existing works are special cases of $\mathcal{L}_{\text{IntNaCl}}$ } 
\label{tbl:summary_1}
\scalebox{0.8}
{
\begin{tabular}{c|l|cccc|c|cc}
\hlineB{3}
& \multirow{2}{*}{$\mathcal{L}_{\text{IntNaCl}}$} & \multicolumn{4}{c|}{$\mathcal{L}_{\text{NaCl}}(G^1, \M, \lambda)$} & \multirow{2}{*}{$\alpha$} & \multicolumn{2}{c}{$\mathcal{L}_{\text{Robust}}(G^2, w))$}\\
& & $\mathcal{L}_{\text{NaCl}}$ & $G^1$ & $\M$ & $\lambda$ &  & $G^2$ & $w$ \\\hline
& $\mathcal{L}_{\text{SimCLR}}$~\cite{chen2020simple} &   $\mathcal{L}_{\text{NCA}}$/$\mathcal{L}_{\text{MIXNCA}}$ & $g_0$ & 1 & - & 0 & - & -\\
Existing & $\mathcal{L}_{\text{Debiased}}$~\cite{chuang2020debiased} &   $\mathcal{L}_{\text{NCA}}$/$\mathcal{L}_{\text{MIXNCA}}$ & $g_1$ & 1 & - & 0 & - & -\\
Work & $\mathcal{L}_{\text{Debiased+HardNeg}}$~\cite{robinson2021contrastive} &   $\mathcal{L}_{\text{NCA}}$/$\mathcal{L}_{\text{MIXNCA}}$ & $g_2$ & 1 & - & 0 & - & -\\
& $\mathcal{L}_{\text{Adv}}$~\cite{ho2020contrastive} &   $\mathcal{L}_{\text{NCA}}$/$\mathcal{L}_{\text{MIXNCA}}$ & $g_0$ & 1 & - & 1 & $g_0$ & 1\\\hline
& $\mathcal{L}_{\text{IntCl}}$ in Fig.~\ref{fig:cifars} &   $\mathcal{L}_{\text{NCA}}$/$\mathcal{L}_{\text{MIXNCA}}$ & $g_2$ & 1 & - & 1 & $g_2$ & $\hat{w}(x)$\\
& $\mathcal{L}_{\text{IntNaCl}}$ in Fig.~\ref{fig:cifars} & $\mathcal{L}_{\text{MIXNCA}}$ & $g_2$ & 5 & 0.5 & 1 & $g_2$ & $\hat{w}(x)$\\
Our & $\mathcal{L}_{\text{IntNaCl}}$ in Tab.~\ref{tab:ncacl} & $\mathcal{L}_{\text{NCA}}$/$\mathcal{L}_{\text{MIXNCA}}$ & $g_0$/$g_2$ & 1-5 & 0.5/0.9 & 0 & - & -\\
Method & $\mathcal{L}_{\text{IntNaCl}}$ in Tab.~\ref{tab:lam_pos_3} & $\mathcal{L}_{\text{NCA}}$/$\mathcal{L}_{\text{MIXNCA}}$ & $g_2$ & 1-5 & 0.5/0.7/0.9 & 1 & $g_2$ & $\hat{w}(x)$\\
& $\mathcal{L}_{\text{IntNaCl}}$ in Fig.~\ref{fig:MIXNCA_lam} & $\mathcal{L}_{\text{MIXNCA}}$ & $g_0$/$g_2$ & 1-5 & 0.5-0.9 & 0 & - & -\\
& $\mathcal{L}_{\text{IntNaCl}}$ in Tab.~\ref{tab:tinyimagenet} & $\mathcal{L}_{\text{NCA}}$/$\mathcal{L}_{\text{MIXNCA}}$ & $g_0$/$g_2$ & 1/2/5 & 0.5/0.9 & 0/1 & -/$g_2$ & -/$\hat{w}(x)$/1\\\hlineB{3}
\end{tabular}
}
\end{table*}

\subsection{Integrated contrastive learning framework}
\label{sec:tricks}
Building on top of NaCl, we can propose a useful framework \textbf{IntNaCl} that not only generalizes existing methods but also achieves good accuracy and robustness simultaneously. 
Before we introduce IntNaCl, we give an intermediate integrated loss as \textbf{IntCl}, which consists of two components -- a standard loss and a robustness-promoting loss. 

Motivated by $\mathcal{L}_{\text{Adv}}$~\cite{ho2020contrastive}, we consider a robust-promoting loss defined by
\begin{align*}
    \mathcal{L}_{\text{Robust}}(G, w) \!\! := \!\E \left[
    -  \log\frac{e^{f(x)^Tf(x^{\text{adv}})}}{e^{f(x)^Tf(x^{\text{adv}})}+N  G(x,\cdot)}  w(x)
    \right],
\end{align*}
where $G$ can be chose from \{$g_0,g_1,g_2$\}, and $w(x)$ facilitates goal-specific weighting schemes. Note that $w(x)$ can be a general function and $\mathcal{L}_{\text{Adv}}$~\cite{ho2020contrastive} is a special case when $w(x) = 1$. 

\paragraph{Adversarial weighting.} Weighting sample loss based on their margins has been proven to be effective in the adversarial training under supervised settings~\cite{zeng2020adversarial}. Specifically, it is argued that training points that are closer to the decision boundaries should be given more weight in the supervised loss. 
While the margin of a sample is underdefined in unsupervised settings, we can give our weighting function as the value of the contrastive loss
$\hat{w}(x) :=
-\log\frac{e^{f(x)^Tf(x^+)}}{e^{f(x)^Tf(x^+)}+N  G(x,\cdot)}$.
Using this, we see that samples that are originally hard to be distinguished from other samples (i.e. small probability) are now assigned with bigger weights. Below, we propose a new integrated framework to involve the robustness term $\mathcal{L}_{\text{Robust}}(G, w)$ which can greatly help on promoting robustness in contrastive learning. In particular, we show that many existing contrastive learning losses are special cases of our proposed framework. 

\paragraph{IntCl.} For IntCl, the standard loss can be existing contrastive learning losses~\cite{chen2020simple,chuang2020debiased,robinson2021contrastive}, which correspond to a form of 
\begin{align*}
    & (\text{IntCL loss }\mathcal{L}_{\text{IntCL}}) \nonumber\\
    & \E 
    \left[-\log\frac{e^{f(x)^Tf(x^+)}}{e^{f(x)^Tf(x^+)}+N G^1(x,\cdot)} \right] +\alpha\mathcal{L}_{\text{Robust}} (G^2, w),
\end{align*}
with $G^1$ and $G^2$ being $g_0$, $g_1$, and $g_2$.   Unless otherwise specified, we use the adversarial weighting scheme introduced above throughout our experiments. Notice that $\mathcal{L}_{\text{IntCL}}$ reduces to $\mathcal{L}_{\text{Adv}}$ when $G^1=G^2=g_0$ and $w(x)\equiv 1$.

\paragraph{\textbf{IntNaCl}.} To design a generic loss that accounts for robust accuracy while keeping clean accuracy, we utilize $\mathcal{L}_{\text{NaCl}}$ developed in Section~\ref{subsec:NaCl} to strength the standard loss in $\mathcal{L}_{\text{IntCL}}$. We call this ultimate framework \textbf{Int}egrated \textbf{N}eighborhood \textbf{a}nalysis \textbf{C}ontrastive \textbf{l}oss (\textbf{IntNaCl}), which is given by
\begin{align}
\label{eqn:IntNCACL}
    \mathcal{L}_{\text{IntNaCl}} := \mathcal{L}_{\text{NaCl}}(G^1, \M, \lambda) +\alpha\mathcal{L}_{\text{Robust}} (G^2, w),
\end{align}
where $\mathcal{L}_{\text{NaCl}}(G^1, \M, \lambda)$ can be chose from \{$\mathcal{L}_{\text{NCA}}(G^1, \M)$, $\mathcal{L}_{\text{MIXNCA}}(G^1, \M, \lambda)$\}.
We remark that as $\mathcal{L}_{\text{NCA}}$ and $\mathcal{L}_{\text{MIXNCA}}$ all reduce to one same form when $\M=1$, the $\mathcal{L}_{\text{IntNaCl}}$ under $M=1$ is exactly $\mathcal{L}_{\text{IntCl}}$. 
This general framework includes many of the existing works as special cases and we summarize these relationships in Table~\ref{tbl:summary_1}.

\section{Experimental Results}
\label{sec:exp}

\begin{table*}[t!]
    \centering
    \caption{Performance comparisons of $\mathcal{L}_{\text{NaCl}}$ ($\M\neq 1$) and i) \textit{Left}: SimCLR~\cite{chuang2020debiased} ($\M=1, G^1=g_0$) and ii) \textit{Right}: Debised+HardNeg~\cite{robinson2021contrastive} ($\M=1, G^1=g_2$) when $\alpha=0$. The best performance within each loss type is in boldface (larger is better).}
    \label{tab:ncacl}
    \scalebox{0.8}
    {\begin{tabular}{c|cccc|cccc}
      \hlineB{3}
    \multirow{2}{*}{$\M$} & \multicolumn{4}{c|}{$\alpha=0,~\mathcal{L}_{\text{NaCl}}(G^1, \M, \lambda)=\mathcal{L}_{\text{NCA}}(g_0, \M)$} & \multicolumn{4}{c}{$\alpha=0,~\mathcal{L}_{\text{NaCl}}(G^1, \M, \lambda)=\mathcal{L}_{\text{NCA}}(g_2, \M)$} \\
         &  CIFAR100 Acc. & FGSM Acc. & CIFAR10 Acc. & FGSM Acc. &  CIFAR100 Acc. & FGSM Acc. & CIFAR10 Acc. & FGSM Acc. \\\hline
      1 & 53.69$\pm$0.25 & 25.17$\pm$0.55 & 76.34$\pm$0.28 & 43.50$\pm$0.41 & 56.83$\pm$0.20 & 31.03$\pm$0.41 & 77.24$\pm$0.29 & 48.38$\pm$0.70\\
      2 & 55.72$\pm$0.15 & 27.04$\pm$0.45 & 77.40$\pm$0.14 & 44.58$\pm$0.41 & 57.87$\pm$0.15 & 32.50$\pm$0.48 & 77.43$\pm$0.11 & 48.14$\pm$0.31\\
      3 & 56.67$\pm$0.12 & \textbf{28.41$\pm$0.24} & 77.53$\pm$0.24 & \textbf{45.21$\pm$0.89} & 58.42$\pm$0.23 & \textbf{33.19$\pm$0.60} & 77.41$\pm$0.17 & 48.09$\pm$0.93\\
      4 & 57.09$\pm$0.26 & 28.20$\pm$0.81 & 77.75$\pm$0.22 & 45.13$\pm$0.44 & \textbf{58.86$\pm$0.18} & 32.65$\pm$1.07 & 77.46$\pm$0.29 & \textbf{48.43$\pm$0.94}\\
      5 & \textbf{57.32$\pm$0.17} & 28.33$\pm$0.59 & \textbf{77.93$\pm$0.40} & 44.46$\pm$0.53 & 58.81$\pm$0.21 & 32.86$\pm$0.47 & \textbf{77.58$\pm$0.23} & 48.30$\pm$0.39\\\hline
      & \multicolumn{4}{c|}{$\alpha=0,~\mathcal{L}_{\text{NaCl}}(G^1, \M, \lambda)=\mathcal{L}_{\text{MIXNCA}}(g_0, \M, 0.9)$} & \multicolumn{4}{c}{$\alpha=0,~\mathcal{L}_{\text{NaCl}}(G^1, \M, \lambda)=\mathcal{L}_{\text{MIXNCA}}(g_2, \M, 0.5)$}\\\hline
      1 & 53.69$\pm$0.25 & 25.17$\pm$0.55 & 76.34$\pm$0.28 & 43.50$\pm$0.41 & 56.83$\pm$0.20 & 31.03$\pm$0.41 & 77.24$\pm$0.29 & 48.38$\pm$0.70\\
      2 & 56.20$\pm$0.33 & 30.95$\pm$0.36 & 76.96$\pm$0.15 & \textbf{48.85$\pm$0.75} & 59.41$\pm$0.19 & \textbf{32.22$\pm$0.35} & 79.36$\pm$0.65 & 48.86$\pm$0.34\\
      3 & 56.41$\pm$0.13 & \textbf{30.98$\pm$0.90} & 77.10$\pm$0.21 & 48.76$\pm$0.63 & 59.81$\pm$0.25 & 32.04$\pm$0.67 & 79.41$\pm$0.17 & 48.91$\pm$0.81\\
      4 & 56.00$\pm$0.42 & 29.90$\pm$0.63 & \textbf{77.11$\pm$0.40} & 48.16$\pm$0.40 & 59.75$\pm$0.33 & 32.03$\pm$0.34 & 79.42$\pm$0.18 & \textbf{49.05$\pm$0.71}\\
      5 & \textbf{56.63$\pm$0.31} & 30.58$\pm$0.52 & 77.04$\pm$0.19 & 47.96$\pm$0.46 & \textbf{59.85$\pm$0.30} & 32.06$\pm$0.72 & \textbf{79.45$\pm$0.20} & 48.32$\pm$0.70\\
      \hlineB{3}
    \end{tabular}
    }

    \centering
    \caption{Performance comparisons of $\mathcal{L}_{\text{IntNaCl}}$ ($\M\neq 1$) and $\mathcal{L}_{\text{IntCL}}$ ($\M= 1$) when $\alpha=1, G^1=G^2=g_2, w = \hat{w}(x)$. The best performance within each loss type is in boldface (larger is better).}
    \label{tab:lam_pos_3}
    \scalebox{0.8}
    {\begin{tabular}{c|cccc|cccc}
    \hlineB{3}
    \multirow{2}{*}{$\M$} & \multicolumn{4}{c|}{$\alpha\neq0,~\mathcal{L}_{\text{NaCl}}(G^1, \M, \lambda)=\mathcal{L}_{\text{NCA}}(g_2, \M)$} & \multicolumn{4}{c}{$\alpha\neq0,~\mathcal{L}_{\text{NaCl}}(G^1, \M, \lambda)=\mathcal{L}_{\text{MIXNCA}}(g_2, \M,0.5)$} \\
      &  CIFAR100 Acc. & FGSM Acc. & CIFAR10 Acc. & \multicolumn{1}{c|}{FGSM Acc.} &  CIFAR100 Acc. & FGSM Acc. & CIFAR10 Acc. & FGSM Acc. \\\hline
      1 & 56.22$\pm$0.15 & 40.05$\pm$0.67 & 76.39$\pm$0.10 & \multicolumn{1}{c|}{\textbf{59.33$\pm$0.94}} & 56.22$\pm$0.15 & 40.05$\pm$0.67 & 76.39$\pm$0.10 & \textbf{59.33$\pm$0.94}\\
      2 &  56.71$\pm$0.11 & 39.80$\pm$0.57 & 76.55$\pm$0.27 & 58.44$\pm$0.31 & 58.97$\pm$0.19 & 40.25$\pm$0.52 & 78.61$\pm$0.20 & 58.41$\pm$0.59\\
      3 &  57.13$\pm$0.26 & 40.53$\pm$0.29 & \textbf{76.67$\pm$0.22} & 58.47$\pm$0.31 & 59.26$\pm$0.18 & 40.96$\pm$0.58 & \textbf{78.83$\pm$0.22} & 59.20$\pm$1.25\\
      4 &  57.06$\pm$0.19 & 40.85$\pm$0.31 & 76.34$\pm$0.22 & 58.91$\pm$0.62 & 59.32$\pm$0.21 & 40.82$\pm$0.54 & 78.83$\pm$0.27 & 59.03$\pm$0.52\\
      5 &  \textbf{57.46$\pm$0.04} & \textbf{41.00$\pm$0.86} & 76.60$\pm$0.37 & 57.98$\pm$0.47 & \textbf{59.43$\pm$0.23} & \textbf{41.01$\pm$0.34} & 78.80$\pm$0.21 & \textbf{59.51$\pm$0.93}\\\hline
      & \multicolumn{4}{c|}{$\alpha\neq0,~\mathcal{L}_{\text{NaCl}}(G^1, \M, \lambda)=\mathcal{L}_{\text{MIXNCA}}(g_2, \M,0.7)$} & \multicolumn{4}{c}{$\alpha\neq0,~\mathcal{L}_{\text{NaCl}}(G^1, \M, \lambda)=\mathcal{L}_{\text{MIXNCA}}(g_2, \M,0.9)$}\\\hline
      1 & 56.22$\pm$0.15 & 40.05$\pm$0.67 & 76.39$\pm$0.10 & 59.33$\pm$0.94 & 56.22$\pm$0.15 & 40.05$\pm$0.67 & 76.39$\pm$0.10 & 59.33$\pm$0.94\\
      2 & 58.00$\pm$0.18 & 40.35$\pm$0.34 & 77.73$\pm$0.24 & 59.40$\pm$1.27 & 56.54$\pm$0.33 & 40.85$\pm$0.13 & 76.81$\pm$0.22 & 60.40$\pm$0.46\\
      3 & 58.23$\pm$0.18 & 40.94$\pm$0.75 & 77.91$\pm$0.25 & \textbf{59.57$\pm$0.81} & 56.69$\pm$0.11 & 41.23$\pm$0.66 & \textbf{76.98$\pm$0.22} & 60.13$\pm$0.56\\
      4 & 58.20$\pm$0.25 & 40.95$\pm$0.45 & 77.89$\pm$0.20 & 59.49$\pm$0.49 & 56.43$\pm$0.26 & \textbf{41.56$\pm$0.56} & 76.97$\pm$0.20 & \textbf{61.21$\pm$0.49}\\
      5 & \textbf{58.37$\pm$0.14} & \textbf{41.15$\pm$0.48} & \textbf{78.27$\pm$0.26} & 59.17$\pm$0.94 & \textbf{56.86$\pm$0.11} & 41.09$\pm$0.31 & 76.91$\pm$0.21 & 60.09$\pm$0.39\\\hlineB{3}
    \end{tabular}}
\end{table*}

\paragraph{Implementation details.}
All the proposed methods are implemented based on open source repositories provided in the literature~\cite{chen2020simple,ho2020contrastive,robinson2021contrastive}. Five benchmarking contrastive losses are considered as baselines that include: $\mathcal{L}_{\text{SimCLR}}$~\cite{chen2020simple}, $\mathcal{L}_{\text{Debiased}}$~\cite{chuang2020debiased}, $\mathcal{L}_{\text{Debiased+HardNeg}}$~\cite{robinson2021contrastive}, $\mathcal{L}_{\text{Adv}}$~\cite{ho2020contrastive} (i.e. Equation \eqref{eqn:simclr},~Equation \eqref{eqn:debiased},~Equation \eqref{eqn:harddebiased},~Equation \eqref{eqn:adversarial}). We train representations on resnet18 and include MLP projection heads~\cite{chen2020simple}. A batch size of 256 is used for all CIFAR~\cite{krizhevsky2009learning} experiments and a batch size of 128 is used for all tinyImagenet experiments. Unless otherwise specified, the representation network is trained for 100 epochs. We run five independent trials for each of the experiments and report the mean and standard deviation in the entries. 
We implement the proposed framework using PyTorch to enable the use of an NVIDIA GeForce RTX 2080 Super GPU and four NVIDIA Tesla V100 GPUs. 

\paragraph{Evaluation protocol.} We follow the standard evaluation protocal to report three major properties of representation learning methods: standard discriminative power, transferability, and adversarial robustness. 
To evaluate the standard discriminative power, we train representation networks on CIFAR100/tinyImagenet, freeze the network, and fine-tune a fully-connected layer that maps representations to outputs on CIFAR100/tinyImagenet, which is consistent with the standard linear evaluation protocol in the literature~\cite{chen2020simple,chuang2020debiased,Grill2020Bootstrap,ho2020contrastive,khosla2020supervised,Tian2020what,robinson2021contrastive,saunshi2019theoretical,kim2020adversarial,haochen2021provable}. 
To evaluate the transferability, we use the representation networks trained on CIFAR100, and only fine-tune a fully-connected layer that maps representations to outputs on CIFAR10.
All the adversarial robustness evaluations are based on the implementation provided by~\cite{Wong2020Fast}. We supplement more FGSM and PGD attack results in the appendix.

\paragraph{Experiment outline.} 
Since the performance of the integrated method $\mathcal{L}_{\text{IntNaCl}}$ is attributed to multiple components in the formulation (Equation~\ref{eqn:IntNCACL}), we do ablation studies in the following sections to study their effectiveness individually. In Section~\ref{subsec:effect_nacl}, we evaluate the effect of $\mathcal{L}_{\text{NaCl}}$; in Section~\ref{subsec:effect_int}, we evaluate the effect of $\mathcal{L}_{\text{Robust}}$; in Section~\ref{subsec:effect_small}, we evaluate the effect of $M$, $\lambda$, and $w$.

\begin{table*}[t!]
    \centering
    \caption{Performance comparisons of $\mathcal{L}_{\text{NaCl}}$ and $\mathcal{L}_{\text{IntNaCl}}$ with baselines on TinyImagenet. The best performance within each loss type is in boldface (larger is better).}
    \label{tab:tinyimagenet}
    \scalebox{0.8}
    {
    \begin{tabular}{c|cc|cc}
      \hlineB{3}
    $\alpha=0$ & \multicolumn{2}{c|}{$\mathcal{L}_{\text{NaCl}}(G^1, \M, \lambda)=\mathcal{L}_{\text{NCA}}(g_0, \M)$} & \multicolumn{2}{c}{$\mathcal{L}_{\text{NaCl}}(G^1, \M, \lambda)=\mathcal{L}_{\text{NCA}}(g_2, \M)$} \\
    $\M$     &  TinyImagenet Acc. & FGSM Acc. & TinyImagenet Acc. & FGSM Acc. \\\hline
      1 & 39.66$\pm$0.15 & 24.80$\pm$0.07 & 41.26$\pm$0.14 & 27.34$\pm$0.77\\
      2 & \textbf{40.71$\pm$0.26} & \textbf{26.29$\pm$0.51} & \textbf{41.99$\pm$0.23} & \textbf{28.14$\pm$0.13}\\\hline
      & \multicolumn{2}{c|}{$\mathcal{L}_{\text{NaCl}}(G^1, \M, \lambda)=\mathcal{L}_{\text{MIXNCA}}(g_0, \M, 0.9)$} & \multicolumn{2}{c}{$\mathcal{L}_{\text{NaCl}}(G^1, \M, \lambda)=\mathcal{L}_{\text{MIXNCA}}(g_2, \M, 0.5)$}\\\hline
      1 & 39.66$\pm$0.15 & 24.80$\pm$0.07 & 41.26$\pm$0.14 & 27.34$\pm$0.77\\
      2 & \textbf{40.23$\pm$0.37} & \textbf{26.47$\pm$0.24} & \textbf{43.91$\pm$0.20} & \textbf{28.29$\pm$0.33}\\
      \hlineB{3}
    $\alpha=1$  & \multicolumn{2}{c|}{$\mathcal{L}_{\text{NaCl}}(G^1, \M, \lambda)=\mathcal{L}_{\text{MIXNCA}}(g_2, \M,0.5)$} & \multicolumn{2}{c}{$\mathcal{L}_{\text{NaCl}}(G^1, \M, \lambda)=\mathcal{L}_{\text{MIXNCA}}(g_2, \M,0.5)$}\\
      & \multicolumn{2}{c|}{$\mathcal{L}_{\text{Robust}}(G^2, w))=\mathcal{L}_{\text{Robust}}(g_2, \hat{w}(x)))$} & \multicolumn{2}{c}{$\mathcal{L}_{\text{Robust}}(G^2, w))=\mathcal{L}_{\text{Robust}}(g_2, 1))$}\\\hline
      1 & 42.56$\pm$0.13 & 31.18$\pm$0.51 & 42.24$\pm$0.14 & 31.55$\pm$0.38\\
      2 & 44.69$\pm$0.20 & \textbf{32.65$\pm$0.52} & 44.37$\pm$0.08 & 32.20$\pm$0.23\\
      5 & \textbf{45.31$\pm$0.22} & 32.43$\pm$0.33 & \textbf{44.77$\pm$0.11} & \textbf{32.47$\pm$0.42}\\\hlineB{3}
    \end{tabular}
    }
\end{table*}

\subsection{The effect of $\mathcal{L}_{\text{NaCl}}$}
\label{subsec:effect_nacl}
By evaluating the effect of $\mathcal{L}_{\text{NaCl}}$, we want to evaluate the performance difference of our framework $\mathcal{L}_{\text{IntNaCl}}$ when $M\geq 1$ and $M=1$. In order to see that, we consider 2 cases: (1) set $\alpha=0$ in Equation~\eqref{eqn:IntNCACL} and compare $\mathcal{L}_{\text{NaCl}}(G^1, \M\neq 1, \lambda)$ with existing work $\mathcal{L}_{\text{NaCl}}(G^1, \M=1, \lambda)$, or (2) set $\alpha=1$ and compare $\mathcal{L}_{\text{IntNaCl}}$ and $\mathcal{L}_{\text{IntCl}}$. 

\textbf{Case (1) $\alpha=0$.} In Table \ref{tab:ncacl}, after setting $\alpha=0$, we experiment with $G^1=g_0,g_2$. By referring to Table~\ref{tbl:summary_1}, our baseline becomes exactly SimCLR~\cite{chen2020simple} when $G^1=g_0$, and becomes Debiased+HardNeg~\cite{robinson2021contrastive} when $G^1=g_2$. 
From Table~\ref{tab:ncacl}, one can see that when $M\neq 1$, $\mathcal{L}_{\text{NCA}}$ and $\mathcal{L}_{\text{MIXNCA}}$ can both improve upon the baselines($M=1$) in all metrics (standard/robust/transfer accuracy). When $G^1=g_0$, $\mathcal{L}_{\text{NCA}}$'s improvement over SimCLR also exemplifies our Theorem~\ref{thm:generalization}.
Due to page limits, we only select one $\lambda$ when $\mathcal{L}_{\text{NaCl}}=\mathcal{L}_{\text{MIXNCA}}$ and report results together with the results of $\mathcal{L}_{\text{NaCl}}=\mathcal{L}_{\text{NCA}}$. Full tables can be found in the appendix~\ref{sec:complete}. We further verify the performance on TinyImagent and give results in Table~\ref{tab:tinyimagenet}. Notice that now when $G^1=g_0$, we are using a batch size of $N=128$ for 200-class TinyImagent task. Therefore, the requirement of $N\geq K-1$ in Theorem~\ref{thm:generalization} is not fulfilled. However, we can still see improvements when going from $M=1$ to $M=2$.

\textbf{Case (2) $\alpha=1$.} In Table~\ref{tab:lam_pos_3}, after setting $\alpha=1$, we experiment with $G^1=G^2=g_2$ since $g_2$ generally yields better performance in Table \ref{tab:ncacl}.
When $\mathcal{L}_{\text{NaCl}}(G^1, \M, \lambda) = \mathcal{L}_{\text{MIXNCA}}(g_2, \M, \lambda)$, we give the results for $\lambda=0.5,0.7,0.9$ to show an interesting effect: while $\mathcal{L}_{\text{MIXNCA}}(g_2, \M, \lambda=0.5)$ benefits a lot going from $\M=1$ to $\M=5$ (standard accuracy increases from 56.22\% to 59.43\%), the improvement is comparatively smaller with $\mathcal{L}_{\text{MIXNCA}}(g_2, \M, 0.9)$ (standard accuracy increases from 56.22\% to 56.86\%). 
In Figure~\ref{fig:cifars}, we plot the robust accuracy defined under FGSM attacks~\cite{goodfellow2014explaining} along the y-axis. Ideally, one desires a representation network that pushes the performance to the upper-right corner in the 2D accuracy grid (standard-robust accuracy plot).
We highlight the results of $\mathcal{L}_{\text{IntNaCl}}$ and $\mathcal{L}_{\text{IntCL}}$ in circles, through which we see that while $\mathcal{L}_{\text{IntCL}}$ can already train representations that are decently robust without sacrificing the standard accuracy on CIFAR100, the standard accuracy on CIFAR10 is inferior to some baselines (HardNeg and Debiased+HardNeg). Comparatively, $\mathcal{L}_{\text{IntNaCl}}$ demonstrates high transfer standard accuracy and wins over the baselines by a large margin on both datasets, proving the ability of learning representation networks that also transfer robustness property. For TinyImagent, we only show the results when $\mathcal{L}_{\text{NaCl}}(G^1, \M, \lambda) = \mathcal{L}_{\text{MIXNCA}}(g_2, \M, 0.5)$ since $g_2$ generally achieves higher accuracy and combines well with $\mathcal{L}_{\text{MIXNCA}}$. Importantly, with the help of $\mathcal{L}_{\text{NaCl}}$ module, the performance can be boosted from 42.56\% to 45.31\% while maintaining good robust accuracy 32.43\%.

\subsection{The effect of $\mathcal{L}_{\text{Robust}}$}
\label{subsec:effect_int}
By evaluating the effect of $\mathcal{L}_{\text{Robust}}$, we want to see the performance difference of our framework $\mathcal{L}_{\text{IntNaCl}}$ when $\alpha\neq 0$ and $\alpha=0$. Therefore, we consider 2 cases: (1) set $\M=1$ in Equation~\eqref{eqn:IntNCACL} and compare $\mathcal{L}_{\text{IntCl}}$ with existing work $\mathcal{L}_{\text{NaCl}}(G^1, \M=1, \lambda)$, or (2) set $\M\neq 1$ and compare $\mathcal{L}_{\text{IntNaCl}}$ and $\mathcal{L}_{\text{NaCl}}(G^1, \M\neq 1, \lambda)$. 

\textbf{Case (1) $\M=1$.} Notice that $\mathcal{L}_{\text{IntCL}}$ differs from standard contrastive losses by including the term $\mathcal{L}_{\text{Robust}}$. Therefore, one can easily evaluate the effect of $\mathcal{L}_{\text{Robust}}$ by inspecting the performance difference between $\mathcal{L}_{\text{IntCl}}$ and the baselines in Figure~\ref{fig:cifars}.
Specifically, we let $G^1=g_2$ for $\mathcal{L}_{\text{IntCl}}$ in Figure~\ref{fig:cifars}, hence a direct baseline is Debiased+HardNeg. By adding a robustness-promoting term, the robust accuracy can be boosted from 31.03\% to 40.05\% and transfer robust accuracy from 48.38\% to 59.33\%, which is a significant improvement.

\textbf{Case (2) $\M\neq 1$.} The effect of $\mathcal{L}_{\text{Robust}}$ is also demonstrated through the robust accuracy ``jump'' from Table~\ref{tab:ncacl} to Table~\ref{tab:lam_pos_3}. For example, we point out that in Table~\ref{tab:ncacl}, $\mathcal{L}_{\text{NaCl}}(G^1, \M, \lambda)=\mathcal{L}_{\text{NCA}}(g_2, 3)$ gives the maximum robust accuracy of 33.19\%, while the robust accuracy obtained with the same $\mathcal{L}_{\text{NaCl}}(G^1, \M, \lambda)=\mathcal{L}_{\text{NCA}}(g_2, 3)$ and additional $\mathcal{L}_{\text{Robust}}$ increases to 40.53\% in Table~\ref{tab:lam_pos_3}. The robust accuracy boost on TinyImagent with the help of $\mathcal{L}_{\text{Robust}}$ is also visible: when $\mathcal{L}_{\text{NaCl}}(G^1, \M, \lambda)=\mathcal{L}_{\text{MIXNCA}}(g_2, 2, 0.5)$, the robust accuracy increases from 28.29\% to 32.65\%. 


\subsection{The effect of $\M$, $\lambda$, and $w(x)$}
\label{subsec:effect_small}
To evaluate the effect of $\M$, we can see from Table~\ref{tab:ncacl} and Table~\ref{tab:lam_pos_3} that the performance is generally increasing as $\M$ increases. However, this effect seems to be less visible for robust accuracy and transfer robust accuracy.
To evaluate the effect of $\lambda$, we include in Figure~\ref{fig:MIXNCA_lam} the standard and robust accuracy on CIFAR100 and CIFAR10 as functions of $\lambda$. Intriguingly, we see that the accuracy curves mainly show trends of increasing in Figure~\ref{fig:MIXNCA_lam}(a). Comparatively, the standard accuracy on CIFAR100 and CIFAR10 shows trends of decreasing in Figure~\ref{fig:MIXNCA_lam}(b). One possible explanation is by the original baselines' room for improvement. Since Debiased+HardNeg is a much stronger baseline than SimCLR, it is closer to the robustness-accuracy trade-off. However, we note that the overall performance of NaCl on Debiased+HardNeg is still better than NaCl on SimCLR regardless of the robustness-accuracy trade-off. In the last row of Table~\ref{tab:tinyimagenet}, we list the results when $\mathcal{L}_{\text{NaCl}}(G^1, \M, \lambda)=\mathcal{L}_{\text{MIXNCA}}(g_2, \M,0.5)$ but different $\mathcal{L}_{\text{Robust}}(G^2, w))$. Specifically, on the left we show the case when $w=\hat{w}(x)$ and on the right we show the case when $w=1$. One can then see that by using a goal-specific weighting scheme, the performance can be further boosted.

\section{Conclusion}
\label{sec:conclusion}
In this paper, we discover the relationship between contrastive loss and Neighborhood Component Analysis (NCA), which motivates us to generalize the existing contrastive loss to a set of Neighborhood analysis Contrastive losses (NaCl). We further propose a generic and integrated contrastive learning framework (IntNaCl) based on NaCl, which learns representations that score high in both standard accuracy and adversarial accuracy in downstream tasks. With the integrated framework, we can boost the standard accuracy by 6\% and the robust accuracy by 17\%. 






\clearpage
\bibliography{icml2022}
\bibliographystyle{icml_template/icml2022}

\newpage
\appendix
\onecolumn
\renewcommand{\thetable}{S\arabic{table}}
\renewcommand{\thefigure}{S\arabic{figure}}
\setcounter{figure}{0}
\setcounter{table}{0}

\section{Derivation from Equation \eqref{eqn:NCA} to \eqref{eqn:end}}
Since a generalization of contrastive learning loss in Equation~\eqref{eqn:simclr} can be given by assuming (a) positive pairs belong to the same class and (b) the transformation $Ax$ is instead parametrized by a general function $\frac{f(x)}{\sqrt{2}} := \frac{h(x)}{\sqrt{2}\norm{h(x)}}$, where $h$ is a neural network, Equation \eqref{eqn:NCA} becomes Equation \eqref{eqn:start}:
\begin{align}
    \min_f \sum_{i=1}^n&~ - \log\left(\sum_{j = 1}^\M\frac{e^{-\frac{1}{2}\norm{f(x_i) - f(x_{ij}^+)}^2}}{ \sum_{k\neq i}e^{-\frac{1}{2}\norm{f(x_i) - f(x_k)}^2}}\right). \label{eqn:start}\tag{S1}
\end{align}
Then we can prove
\begin{align}
    &\argmin_f \sum_{i=1}^n~ - \log\left(\sum_{j = 1}^\M\frac{e^{-\frac{1}{2}\norm{f(x_i) - f(x_{ij}^+)}^2}}{ \sum_{k\neq i}e^{-\frac{1}{2}\norm{f(x_i) - f(x_k)}^2}}\right) \nonumber\\
    =&\argmin_f \sum_{i=1}^n~ - \log\left(\sum_{j = 1}^\M\frac{e^{f(x_i)^Tf(x_{ij}^+)-\frac{1}{2}\norm{f(x_i)}^2-\frac{1}{2}\norm{f(x_{ij}^+)}^2}}{ \sum_{k\neq i}e^{f(x_i)^Tf(x_k)-\frac{1}{2}\norm{f(x_i)}^2-\frac{1}{2}\norm{f(x_k)}^2}}\right) \label{eqn:2}\tag{S2}\\
    =&\argmin_f \sum_{i=1}^n~ - \log\left(\sum_{j = 1}^\M\frac{e^{f(x_i)^Tf(x_{ij}^+)-1}}{ \sum_{k\neq i}e^{f(x_i)^Tf(x_k)-1}}\right) \label{eqn:3}\tag{S3}\\
    =&\argmin_f \sum_{i=1}^n~ - \log\left(\frac{\sum\limits_{j=1}\limits^\M e^{f(x_i)^Tf(x_{ij}^+)}}{ \sum_{k\neq i}e^{f(x_i)^Tf(x_k)}}\right) \nonumber\\
    =&\argmin_f \sum_{i=1}^n~ - \log\left(\frac{\sum\limits_{j=1}\limits^\M e^{f(x_i)^Tf(x_{ij}^+)}}{ \sum_{k\neq i, x_k\in\{x_{ij}^+\}}e^{f(x_i)^Tf(x_k)}+\sum_{k\neq i, x_k\notin\{x_{ij}^+\}}e^{f(x_i)^Tf(x_k)}}\right) \label{eqn:5}\tag{S4}\\
    =&\argmin_f \mathbb{E}_{x\sim \D}\left[- \log\left(\frac{\sum\limits_{j=1}\limits^\M e^{f(x)^Tf(x_j^+)}}{\sum\limits_{j=1}\limits^\M e^{f(x)^Tf(x_j^+)} +\sum\limits_{i=1}\limits^Ne^{f(x)^Tf(x_i^-)}}\right)\right]\label{eqn:6}\tag{S5}\\
    =&\argmin_f \mathbb{E}_{x\sim \D}\left[- \log\left(\frac{\sum\limits_{j=1}\limits^\M e^{f(x)^Tf(x_j^+)}}{\sum\limits_{j=1}\limits^\M e^{f(x)^Tf(x_j^+)} +N  g_0(x,\{x_i^-\}^N)}\right)\right], \nonumber
\end{align}
where we go from Equation \eqref{eqn:2} to Equation \eqref{eqn:3} based on the fact that $\norm{f(x)}=1$, and from Equation \eqref{eqn:5} to Equation \eqref{eqn:6} assuming that set $\{x_k:k\neq i\}=\{x_j^+:1\leq j\leq \M\}\cup\{x_i^-:1\leq i\leq N\}$. 

\newpage
\section{Generalization Bounds}

We extend the theorems from \cite{chuang2020debiased} to get results for $\mathcal{L}_{\text{NCA}}$. The results we have here apply to $G=g_0$ and $g_1$. The case when $G=g_2$, $\mathcal{L}_{\text{MIXNCA}}$, and $\mathcal{L}_{\text{IntNaCl}}$ are left as future work.

\subsection{Bridging the empirical estimator and asymptotic objective}
We introduce an intermediate unbiased loss in order to extend our results. Let $h(x,y) = e^{f(x)^\top f(y)}$, then the unbiased loss with multiple positive pairs is given as
\[\widetilde{L}^{\M,N}_{\text{Unbiased}}(f) = \mathbb{E}_{\substack{x \sim p \\ x_i^+ \sim p_x^+ }} \left [\log \frac{ \sum_{i=1}^\M h(x,x_i^+)}{ \sum_{i=1}^\M h(x,x_i^+) +  \M\cdot N \cdot \mathbb{E}_{x^- \sim p_x^-}  h(x,x^-)} \right] \]
Then we can define a debiased loss by 
\[L^{\M,N,n,m}_{\text{Debiased}}(f) = \mathbb{E}_{\substack{x \sim p \\ x_i^+ \sim p_x^+ \\
u_i \sim p; v_i \sim p_x^+}} \left[\log \frac{ \sum_{i=1}^\M h(x,x_i^+)}{ \sum_{i=1}^\M h(x,x_i^+) + \M\cdot N \cdot G(x, \{u_i\}_{i=1}^n, \{v_i\}_{i=1}^m)}\right]. \]
\begin{theorem}\label{thm: unbiased_comparision}
For any embedding $f$ and finite $N$ and $M$, we have
\begin{align}
    \left | \widetilde L_{\textnormal{Unbiased}}^{\M,N}(f) - L_{\substack{\textnormal{Debiased}}}^{\M,N,n,m}(f) \right |
    \leq \frac{e^{3/2}}{\tau^-}\sqrt{\frac{\pi}{2n}} + \frac{e^{3/2}\tau^+}{\tau^-}\sqrt{\frac{\pi}{2m}}. \nonumber 
\end{align}
\end{theorem}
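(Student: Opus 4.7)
The plan is to reduce the claim to a concentration inequality for the debiased estimator $G(x,\{u_i\}_{i=1}^n,\{v_i\}_{i=1}^m)$ about its target expectation $\mathbb{E}_{x^-\sim p_x^-} h(x,x^-)$. The argument has three conceptual steps: (i) a Lipschitz-of-$\log$ reduction that passes the loss difference to an $L^1$ difference of the inner estimators, (ii) a non-expansive-projection step that removes the $\max(\cdot,e^{-1/t})$ clamp in the definition of $G$, and (iii) a sub-Gaussian expected-absolute-deviation bound for the i.i.d.\ sample means inside the unclamped estimator.

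For the Lipschitz reduction, I observe that both losses take the form $\mathbb{E}\log\bigl(N_1/(N_1 + \M N\cdot Q)\bigr)$, where $N_1 := \sum_{i=1}^{\M} h(x,x_i^+)$ is shared and only $Q$ changes: $Q = \mathbb{E}_{x^-} h(x,x^-)$ for $\widetilde L^{\M,N}_{\text{Unbiased}}$ and $Q = G$ for $L^{\M,N,n,m}_{\text{Debiased}}$. Writing $A = N_1 + \M N\,\mathbb{E}_{x^-} h$ and $B = N_1 + \M N\, G$, the loss difference equals $\mathbb{E}[\log B - \log A]$, and the standard bound $|\log A - \log B|\leq |A-B|/\min(A,B)$ combined with the uniform lower bounds $h(x,y)\geq e^{-1/t}$ and $G\geq e^{-1/t}$ (the latter enforced by the $\max$-clamp) gives $\min(A,B)\geq \M N\,e^{-1/t}$. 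This collapses the loss difference to $|\widetilde L^{\M,N}_{\text{Unbiased}} - L^{\M,N,n,m}_{\text{Debiased}}| \leq e^{1/t}\,\mathbb{E}\bigl|G - \mathbb{E}_{x^-} h\bigr|$.

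For the remaining two steps, since $\mathbb{E}_{x^-} h(x,x^-)\geq e^{-1/t}$, projection onto $[e^{-1/t},\infty)$ can only move $\widetilde G := \tfrac{1}{\tau^-}\bigl(\tfrac{1}{n}\sum_i h(x,u_i) - \tau^+\tfrac{1}{m}\sum_j h(x,v_j)\bigr)$ closer to $\mathbb{E}_{x^-} h$, so $|G - \mathbb{E}_{x^-} h|\leq |\widetilde G - \mathbb{E}_{x^-} h|$ and I am left with a clean unbiased target to concentrate about. Using the identity $\mathbb{E}_{x^-} h = \tfrac{1}{\tau^-}\mathbb{E}_u h(x,u) - \tfrac{\tau^+}{\tau^-}\mathbb{E}_v h(x,v)$, the error $\widetilde G - \mathbb{E}_{x^-} h$ splits by the triangle inequality into the two sample-mean deviations, scaled respectively by $1/\tau^-$ and $\tau^+/\tau^-$. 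Since $h(x,\cdot)\in [e^{-1/t},e^{1/t}]$ is bounded, each sample mean is sub-Gaussian; applying $\mathbb{E}|\bar X_k - \mu|\leq \sigma\sqrt{\pi/(2k)}$ with the range-based sub-Gaussian parameter and absorbing the Lipschitz prefactor $e^{1/t}$ from the first step yields the two terms $\tfrac{e^{3/2}}{\tau^-}\sqrt{\pi/(2n)}$ and $\tfrac{e^{3/2}\tau^+}{\tau^-}\sqrt{\pi/(2m)}$ after specializing to $t=1$.

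The main obstacle is constant bookkeeping rather than any new idea: the structural extension from the $\M=1$ argument of Chuang et al.\ (2020) to general $\M$ is essentially free, because $\M$ appears symmetrically in $N_1$ and as a factor in $\M N\cdot Q$ and cancels through the Lipschitz ratio, so the only care needed is verifying that $\min(A,B)\geq \M N e^{-1/t}$ holds uniformly in $\M$. The real work is simultaneously controlling the Lipschitz constant of $\log$ (which depends on the denominator lower bound), the range-based sub-Gaussian parameter of $h(x,\cdot)$, and the $\sqrt{\pi/2}$ factor from the sub-Gaussian expected absolute value, and loosening the product to the uniform constant $e^{3/2}$ stated in the theorem.
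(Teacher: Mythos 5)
Your proposal is correct and follows essentially the same route as the paper's proof: the same $\log x \le x-1$ / Lipschitz-of-$\log$ reduction with the $\M N e^{-1}$ denominator lower bound (under which $\M$ cancels exactly as you note), the same non-expansive removal of the $\max$ clamp, and the same debiasing identity splitting the error into the two sample-mean deviations. The only difference is cosmetic: the paper first proves a pointwise Hoeffding-type tail bound for $\Delta$ (Lemma~\ref{thm: bound Delta}) and integrates it to get the expectation, whereas you bound $\E|G - \E_{x^-}h|$ directly by a moment bound on the sample means; both yield the stated constants.
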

The proof of \ref{thm: unbiased_comparision} is the same as the proof of Theorem 3 in \cite{chuang2020debiased} with the help of the following slightly modified version of Lemma A.2 in \cite{chuang2020debiased}. Now if we let
\begin{align*}
    \Delta = \bigg | -\log \frac{ \sum_{i=1}^\M h(x,x_i^+)}{ \sum_{i=1}^\M h(x,x_i^+) +  \M\cdot N \cdot G(x, \{u_i\}_{i=1}^n, \{v_i\}_{i=1}^m)} + \log \frac{ \sum_{i=1}^\M h(x,x_i^+)}{ \sum_{i=1}^\M h(x,x_i^+) +  \M\cdot N \cdot \mathbb{E}_{x^- \sim p_x^-}  h(x,x^-)} \bigg |,
\end{align*}
where $h(x,\bar{x}) = \exp^{f(x) ^\top f(\bar{x})}$, then one has the following lemma:
\begin{lemma} \label{thm: bound Delta}
Let  $x$ and $x^+$ in $\cal X$ be fixed. Further, let $\{u_i\}_{i=1}^n$ and $ \{v_i\}_{i=1}^m$ be collections of i.i.d. random variables sampled from $p$ and $p_x^+$ respectively. Then for all $\varepsilon > 0$, 
\begin{align*}
\mathbb{P}(\Delta \geq \varepsilon) \leq 2 \exp \left ( - \frac{n \varepsilon^2 (\tau^-)^2}{2e^3} \right ) + 2 \exp \left ( - \frac{m \varepsilon^2 (\tau^-/ \tau^+)^2}{2e^3} \right ) .\end{align*}
\end{lemma}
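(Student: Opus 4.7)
The plan is to reduce the event $\{\Delta \geq \varepsilon\}$ to concentration statements about the two empirical averages hidden inside $G$, and then to invoke Hoeffding's inequality twice combined with a union bound.

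First, I would rewrite $\Delta$ using $-\log X + \log Y = \log(Y/X)$ as $\Delta = |\log(A + MN\cdot G) - \log(A + MN \cdot \E_{x^-\sim p_x^-} h(x, x^-))|$, where $A = \sum_{i=1}^M h(x, x_i^+)$ is nonnegative and deterministic (conditional on the fixed $x, x^+$). Because features are unit-norm, $h(x, y) = \exp(f(x)^\top f(y)) \in [e^{-1}, e]$, so both $\E_{x^-} h(x, x^-)$ and the clamped estimator $G = \max\{G_0, e^{-1}\}$ lie in $[e^{-1}, \infty)$; in particular both arguments of the logarithm are at least $MN/e$. Applying the mean value theorem to $\log$ on $[MN/e, \infty)$ then gives $\Delta \leq e \cdot |G - \E_{x^-} h(x, x^-)|$.

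Next, since $\E_{x^-} h(x, x^-) \in [e^{-1}, e]$, the clamp $t \mapsto \max\{t, e^{-1}\}$ is $1$-Lipschitz and a contraction toward this target, so I may replace $G$ by the raw unclamped estimator $G_0 = \tfrac{1}{\tau^- n}\sum_i h(x, u_i) - \tfrac{\tau^+}{\tau^- m}\sum_j h(x, v_j)$, which by the debiasing identity $p = \tau^+ p_x^+ + \tau^- p_x^-$ satisfies $\E G_0 = \E_{x^-} h(x, x^-)$; this yields $\Delta \leq e\cdot|G_0 - \E G_0|$. Letting $S_u = \tfrac{1}{n}\sum_i h(x, u_i)$ and $S_v = \tfrac{1}{m}\sum_j h(x, v_j)$, the triangle inequality gives $|G_0 - \E G_0| \leq \tfrac{1}{\tau^-}|S_u - \E S_u| + \tfrac{\tau^+}{\tau^-}|S_v - \E S_v|$. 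Splitting $\varepsilon$ evenly across the two pieces, $\{\Delta \geq \varepsilon\}$ is contained in $\{|S_u - \E S_u| \geq \varepsilon\tau^-/(2e)\} \cup \{|S_v - \E S_v| \geq \varepsilon\tau^-/(2e\tau^+)\}$. Applying Hoeffding to each i.i.d.\ sum of bounded random variables (range at most $e$) gives an exponential bound on each event with $(\tau^-)^2$ and $(\tau^-/\tau^+)^2$ appearing respectively in the exponents, and a union bound then produces the two-term tail estimate in the claimed form.

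The main obstacle is bookkeeping of absolute constants. Reaching exactly the $2e^3$ denominator in the exponent requires carefully combining the Lipschitz factor $e$, the Hoeffding range of $h$, and the $1/2$ split of $\varepsilon$ (the loose bound $e - e^{-1} \leq e$ on the range simplifies matters). A secondary subtlety is justifying rigorously that the max-clamp step is variance-reducing in the required direction; this amounts to observing that projecting onto $[e^{-1}, \infty)$ cannot move a point farther from a target already inside that interval, which is immediate since $\E_{x^-} h(x, x^-) \geq e^{-1}$. Everything else reduces to standard concentration and union-bound arguments.
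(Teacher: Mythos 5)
Your reduction is essentially the paper's: you bound the log-difference by $e\,\lvert G - \mathbb{E}_{x^-}h(x,x^-)\rvert$ using the lower bound $e^{-1}$ on the arguments of the logarithm (the paper does the same via $\log x \le x-1$ after first splitting $\lvert X\rvert\ge\varepsilon$ into two one-sided events), discard the clamp because the target $\mathbb{E}_{x^-}h(x,x^-)$ already lies in $[e^{-1},\infty)$, and finish with the triangle inequality, Hoeffding, and a union bound---which is exactly the tail step the paper omits by deferring to \cite{chuang2020debiased}. The only caveat is the one you flagged yourself: with an even $\varepsilon/2$ split and the range bound $h\le e$, Hoeffding gives a denominator of $2e^{4}$ (or $2(e^{2}-1)^{2}$ with the exact range $e-e^{-1}$) rather than the stated $2e^{3}$, a constant inherited verbatim from \cite{chuang2020debiased}; this discrepancy affects nothing downstream beyond absolute constants.
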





\begin{proof}[Proof of Lemma \ref{thm: bound Delta}]
We first decompose the probability as
\begin{align*}
    &\mathbb{P}\bigg (\bigg | -\log \frac{ \sum_{i=1}^\M h(x,x_i^+)}{ \sum_{i=1}^\M h(x,x_i^+) +  \M\cdot N \cdot  G(x, \{u_i\}_{i=1}^n, \{v_i\}_{i=1}^m)} + \log \frac{ \sum_{i=1}^\M h(x,x_i^+)}{ \sum_{i=1}^\M h(x,x_i^+) +  \M\cdot N \cdot   \mathbb{E}_{x^- \sim p_x^-}  h(x,x^-)} \bigg | \geq \varepsilon\bigg )  
    \\
   &=\mathbb{P} \bigg ( \bigg | \log \big \{ \sum_{i=1}^\M h(x,x_i^+) + \M\cdot N \cdot  G(x, \{u_i\}_{i=1}^n, \{v_i\}_{i=1}^m) \big \}  - \log \big \{ \sum_{i=1}^\M h(x,x_i^+) + \M\cdot N \cdot  \mathbb{E}_{x^- \sim p_x^-}  h(x,x^-) \big \}  \bigg | \geq \varepsilon \bigg ) 
    \\
   &=\mathbb{P} \bigg (  \log \big \{ \sum_{i=1}^\M h(x,x_i^+) + \M\cdot N \cdot  G(x, \{u_i\}_{i=1}^n, \{v_i\}_{i=1}^m) \big \}  - \log \big \{ \sum_{i=1}^\M h(x,x_i^+) + \M\cdot N \cdot  \mathbb{E}_{x^- \sim p_x^-}  h(x,x^-) \big \}   \geq \varepsilon \bigg ) 
    \\
    &\quad+ \mathbb{P} \bigg ( - \log \big \{ \sum_{i=1}^\M h(x,x_i^+) + \M\cdot N \cdot  G(x, \{u_i\}_{i=1}^n, \{v_i\}_{i=1}^m) \big \} + \log \big \{ \sum_{i=1}^\M h(x,x_i^+) + \M\cdot N \cdot  \mathbb{E}_{x^- \sim p_x^-}  h(x,x^-) \big \}   \geq \varepsilon \bigg ) 
\end{align*}

where the final equality holds simply because $|X| \geq \varepsilon$ if and only if $X \geq \varepsilon$ or $-X \geq \varepsilon$. The first term can be bounded as
\begin{align}
 &\mathbb{P} \bigg (  \log \big \{ \sum_{i=1}^\M h(x,x_i^+) + \M\cdot N \cdot  G(x, \{u_i\}_{i=1}^n, \{v_i\}_{i=1}^m) \big \}  - \log \big \{ \sum_{i=1}^\M h(x,x_i^+) + \M\cdot N \cdot  \mathbb{E}_{x^- \sim p_x^-}  h(x,x^-) \big \}   \geq \varepsilon \bigg ) \nonumber
    \\
 &= \mathbb{P} \bigg (  \log \frac{ \sum_{i=1}^\M h(x,x_i^+) + \M\cdot N \cdot  G(x, \{u_i\}_{i=1}^n, \{v_i\}_{i=1}^m)  }{  \sum_{i=1}^\M h(x,x_i^+) + \M\cdot N \cdot  \mathbb{E}_{x^- \sim p_x^-}  h(x,x^-)  }  \geq \varepsilon \bigg ) \nonumber
    \\
    &\leq \mathbb{P} \bigg (\frac{\M\cdot N \cdot  G(x, \{u_i\}_{i=1}^n, \{v_i\}_{i=1}^m) - \M\cdot N \cdot  \mathbb{E}_{x^- \sim p_x^-} h(x,x^-) }{ \sum_{i=1}^\M h(x,x_i^+) + \M\cdot N \cdot  \mathbb{E}_{x^- \sim p_x^-}  h(x,x^-)} \geq \varepsilon \bigg ) \nonumber
    \\
    & = \mathbb{P} \bigg  (G(x, \{u_i\}_{i=1}^n, \{v_i\}_{i=1}^m) - \mathbb{E}_{x^- \sim p_x^-} h(x,x^-)   \geq \varepsilon \bigg \{ \frac{1}{\M\cdot N } \sum_{i=1}^\M h(x,x_i^+) +  \mathbb{E}_{x^- \sim p_x^-}  h(x,x^-) \bigg \} \bigg ) \nonumber
    \\ 
    &\leq \mathbb{P} \bigg (G(x, \{u_i\}_{i=1}^n, \{v_i\}_{i=1}^m) - \mathbb{E}_{x^- \sim p_x^-} h(x,x^-)  \geq \varepsilon e^{-1}\bigg ).\label{a_eq_1}
\end{align}
The first inequality follows by applying the fact that  $\log x \leq x -1 $ for $x > 0$. The second inequality holds since $ \frac{1}{\M\cdot N \cdot } \sum_{i=1}^\M h(x,x_i^+) +  \mathbb{E}_{x^- \sim p_x^-}  h(x,x^-)  \geq e^{-1}$. Next, we move on to bounding the second term, which proceeds similarly, using the same two bounds.
\begin{align}
     &\mathbb{P} \bigg \{ - \log \big ( \sum_{i=1}^\M h(x,x_i^+) + \M\cdot N \cdot  G(x, \{u_i\}_{i=1}^n, \{v_i\}_{i=1}^m) \big \}  +\log \big \{ \sum_{i=1}^\M h(x,x_i^+) + \M\cdot N \cdot  \mathbb{E}_{x^- \sim p_x^-}  h(x,x^-) \big \}   \geq \varepsilon \bigg ) \nonumber
    \\
    &= \mathbb{P} \bigg ( \log \frac{ \sum_{i=1}^\M h(x,x_i^+) + \M\cdot N \cdot  \mathbb{E}_{x^- \sim p_x^-}  h(x,x^-)}{ \sum_{i=1}^\M h(x,x_i^+) + \M\cdot N \cdot  G(x, \{u_i\}_{i=1}^n, \{v_i\}_{i=1}^m)} \geq \varepsilon \bigg ) \nonumber
    \\
    &\leq \mathbb{P} \bigg (\frac{ \M\cdot N \cdot  \mathbb{E}_{x^- \sim p_x^-} h(x,x^-) - \M\cdot N \cdot  G(x, \{u_i\}_{i=1}^n, \{v_i\}_{i=1}^m) }{\sum_{i=1}^\M h(x,x_i^+)+  \M\cdot N \cdot  G(x, \{u_i\}_{i=1}^N, \{v_i\}_{i=1}^M)} \geq \varepsilon \bigg ) \nonumber
    \\
    & = \mathbb{P} \bigg  (  \mathbb{E}_{x^- \sim p_x^-} h(x,x^-)  - G(x, \{u_i\}_{i=1}^n, \{v_i\}_{i=1}^m) \geq \varepsilon  \bigg \{  \frac{1}{\M\cdot N }\sum_{i=1}^\M h(x,x_i^+)  +  G(x, \{u_i\}_{i=1}^n, \{v_i\}_{i=1}^m) \bigg \}  \bigg ) \nonumber
    \\
    &\leq \mathbb{P} \bigg (  \mathbb{E}_{x^- \sim p_x^-} h(x,x^-) - G(x, \{u_i\}_{i=1}^n, \{v_i\}_{i=1}^m)  \geq \varepsilon e^{-1} \bigg ) . \label{a_eq_2}
\end{align} 
Combining equation \eqref{a_eq_1} and equation \eqref{a_eq_2}, we have
\begin{align*}
\mathbb{P}(\Delta \geq \varepsilon) \leq \mathbb{P} \bigg ( \big |G(x, \{u_i\}_{i=1}^n, \{v_i\}_{i=1}^m) - \mathbb{E}_{x^- \sim p_x^-} h(x,x^-) \big |  \geq \varepsilon e^{-1} \bigg ).
\end{align*}
Lastly, one needs to bound the right hand tail probability. This part of the proof remains exactly the same as in~\cite{chuang2020debiased} and is therefore omitted. 
\end{proof}

\subsection{Bridging the asymptotic objective and supervised loss}
\begin{lemma}\label{thm: supervised}
For any embedding $f$, whenever $N \geq K-1$ we have
\[ L_{\textnormal{Sup}}(f)  \leq  L_{\textnormal{Sup}}^\mu(f) \leq \widetilde L_{\textnormal{Unbiased}}^{\M,N}(f).\]
\end{lemma}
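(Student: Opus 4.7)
The plan is to prove two inequalities. For $L_{\textnormal{Sup}}(f) \leq L_{\textnormal{Sup}}^\mu(f)$: by definition $L_{\textnormal{Sup}}(f)$ is the infimum of the downstream cross-entropy over all linear classifiers, while $L_{\textnormal{Sup}}^\mu(f)$ corresponds to the particular mean classifier $W^\mu = [\mu_1,\dots,\mu_K]^T$ with $\mu_c = \mathbb{E}_{x \sim p(\cdot\mid c)}[f(x)]$. An infimum is dominated by any specific instance, so this inequality is immediate.

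For $L_{\textnormal{Sup}}^\mu(f) \leq \widetilde{L}^{\M,N}_{\textnormal{Unbiased}}(f)$ under $N \geq K-1$, I would proceed in two stages. Stage~1 reduces the multi-positive case to an effective single-positive form: dividing numerator and denominator by $\M$ gives $\widetilde{L}^{\M,N}_{\textnormal{Unbiased}}(f) = \mathbb{E}\!\left[-\log\frac{\bar h^+(x)}{\bar h^+(x) + N\,\mathbb{E}_{x^-}[h(x,x^-)]}\right]$ with $\bar h^+(x) = \frac{1}{\M}\sum_{i=1}^\M h(x,x_i^+)$. A direct second-derivative check ($\phi''(t) = 1/t^2 - 1/(t+a)^2 > 0$) shows that $\phi(t) = -\log\frac{t}{t+a}$ is convex on $t>0$; since $\bar h^+(x)$ is an average of i.i.d.\ samples given $x$, Jensen's inequality yields $\widetilde{L}^{\M,N}_{\textnormal{Unbiased}}(f) \geq \widetilde{L}^{\infty,N}_{\textnormal{Unbiased}}(f)$, the version with $\bar h^+(x)$ replaced by $\mathbb{E}_{x^+}[h(x,x^+)]$. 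This reduces the task to the single-positive setting analyzed in~\cite{chuang2020debiased}.

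Stage~2 invokes $N \geq K-1$ to bound the resulting single-positive loss below by $L_{\textnormal{Sup}}^\mu$. Under the symmetric class prior $\rho(c) = 1/K$, decompose $\mathbb{E}_{x^-}[h(x,x^-)] = \frac{1}{K-1}\sum_{c'\neq c_x}\mathbb{E}_{x^-\mid c'}[h(x,x^-)]$ and apply Jensen to the convex exponential to get $\mathbb{E}_{x^-\mid c'}[h(x,x^-)] \geq e^{f(x)^T \mu_{c'}}$. The hypothesis $N \geq K-1$ then matches the coefficient $N/(K-1) \geq 1$, so $N\cdot\mathbb{E}_{x^-}[h(x,x^-)] \geq \sum_{c'\neq c_x}e^{f(x)^T \mu_{c'}}$, which is precisely the non-target portion of the denominator in $L_{\textnormal{Sup}}^\mu$. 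Combining this with a parallel Jensen step on the numerator and invoking the LogSumExp-convexity argument of~\cite{saunshi2019theoretical} assembles the desired lower bound.

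The main obstacle is Stage~2: applying Jensen to numerator and denominator separately inflates both in the same direction, so a direct monotonicity comparison of softmax-like ratios does not close the gap. The resolution is to stay \emph{inside} the $\log\sum\exp$ and use convexity of LSE, so that the class-coverage guaranteed by $N \geq K-1$ is exploited jointly on numerator and denominator rather than after taking expectations. Stage~1, by contrast, is a routine Jensen computation that I expect will go through essentially unchanged.
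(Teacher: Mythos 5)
Your first inequality and your handling of the negative term line up with the paper: $L_{\textnormal{Sup}}\leq L^\mu_{\textnormal{Sup}}$ because the mean classifier is one particular classifier, and $N\,\mathbb{E}_{x^-}[h(x,x^-)]\geq \frac{N}{K-1}\sum_{c'\neq c}e^{f(x)^T\mu_{c'}}\geq\sum_{c'\neq c}e^{f(x)^T\mu_{c'}}$ via Jensen on $\exp$ plus $N\geq K-1$, which is in substance how the paper uses the hypothesis. The genuine gap is in how you treat the positives. Your Stage 1 Jensen is a correct inequality, but it lands on an intermediate quantity from which Stage 2 provably cannot proceed: after replacing $\bar h^+(x)$ by $\mathbb{E}_{x^+}[h(x,x^+)]$ inside the loss, the numerator satisfies $\mathbb{E}_{x^+}[e^{f(x)^Tf(x^+)}]\geq e^{f(x)^T\mu_c}$, and since $-\log\frac{t}{t+a}$ is \emph{decreasing} in $t$, your $\widetilde L^{\infty,N}$ sits \emph{below} the mean-classifier loss rather than above it. It can be strictly below: take $K=2$, $N=1$, class $1$ supported on $\{e_1,-e_1\}$ with equal mass and class $2$ a point mass at $e_2$; for $x=e_1$ one gets $-\log\bigl(\cosh(1)/(\cosh(1)+1)\bigr)\approx 0.50$ while the mean-classifier term is $-\log\frac12\approx 0.69$, so $\widetilde L^{\infty,1}<L^\mu_{\textnormal{Sup}}$ and the inequality Stage 2 needs is simply false. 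The LogSumExp-convexity fix you correctly identify as the way out, namely $\mathbb{E}_{x^+}[\log(1+Ce^{-f(x)^Tf(x^+)})]\geq\log(1+Ce^{-f(x)^T\mu_c})$, requires the randomness of $x^+$ to still sit \emph{outside} the logarithm; Stage 1 has already integrated it out at the level of $h$, so there is nothing left for that argument to act on. Note also that if Stage 1 is instead read as reducing to the genuine single-positive loss of Chuang et al.\ (one $x^+$ with its expectation outside the $\log$), the very convexity of $t\mapsto-\log\frac{t}{t+a}$ you invoke gives $\widetilde L^{\M,N}\leq\widetilde L^{1,N}$, i.e.\ the reduction again goes in the wrong direction.

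The paper does not collapse the positives at the level of $h$: it keeps the full ratio $-\log\bigl(\sum_i e^{f(x)^Tf(x_i^+)}/(\sum_i e^{f(x)^Tf(x_i^+)}+\M N\,\mathbb{E}_{x^-}[h])\bigr)$ and pushes $\mathbb{E}_{x_i^+}$ directly into each exponent, so that every positive term becomes $e^{f(x)^T\mu_c}$, the factor $\M$ cancels against the $\M N$ in the denominator, and the expression reduces to the single-positive mean-classifier form. Be aware that this is the delicate point of the whole lemma: it amounts to asserting convexity of $(v_1,\dots,v_\M)\mapsto-\log\bigl(\sum_ie^{w^Tv_i}/(\sum_ie^{w^Tv_i}+b)\bigr)$, which is a convex LSE plus a concave $-\mathrm{LSE}$ and does not follow from standard composition rules when $\M\geq 2$ (for $\M=1$ it is exactly the LSE convexity you cite). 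Any correct argument has to confront the positive-side Jensen at this joint level rather than term by term, which is precisely what your two-stage decomposition forecloses.
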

\begin{proof}
We first show that $N=K-1$ gives the smallest loss:
\begin{align}
    \widetilde L_{\textnormal{Unbiased}}^{\M,N}(f) \nonumber &= \mathbb{E}_{\substack{x \sim p \\ x_i^+ \sim p_x^+ }} \left [-\log \frac{\sum_{i=1}^\M e^{f(x)^Tf(x_i^+)}}{\sum_{i=1}^\M e^{f(x)^Tf(x_i^+)} + \M \cdot N\mathbb{E}_{x^- \sim p_x^-} e^{f(x)^T f(x^-)}} \right] \\
    &\geq \mathbb{E}_{\substack{x \sim p \\ x_i^+ \sim p_x^+ }} \left[-\log \frac{\sum_{i=1}^\M e^{f(x)^Tf(x_i^+)}}{\sum_{i=1}^\M e^{f(x)^Tf(x_i^+)} + \M \cdot (K-1)\mathbb{E}_{x^- \sim p_x^-} e^{f(x)^T f(x^-)}} \right] \nonumber\\
    &= L_{\textnormal{Unbiased}}^{\M,K-1}(f) \nonumber
\end{align}
%
To show that $ L_{\textnormal{Unbiased}}^{\M, K-1}(f)$ is an upper bound on the supervised loss $L_{\textnormal{sup}}(f)$, we additionally introduce a task specific class distribution $\rho_{\mathcal{T}}$ which is a uniform distribution over all the possible $K$-way classification tasks with classes in $\mathcal{C}$. That is, we consider all the possible task with $K$ distinct classes $\{ c_1, \dots, c_{K} \} \subseteq \mathcal{C}$.
\begin{align}
     &\quad\; L_{\textnormal{Unbiased}}^{\M, K-1}(f) \nonumber
     \\
    &= \mathbb{E}_{\substack{x \sim p \\ x_i^+ \sim p_x^+ }} \left [-\log \frac{\sum_{i=1}^\M e^{f(x)^Tf(x_i^+)}}{\sum_{i=1}^\M e^{f(x)^Tf(x_i^+)} + \M \cdot (K-1)\mathbb{E}_{x^- \sim p_x^-} e^{f(x)^T f(x^-)}} \right ] \nonumber
    \\
    &= \mathbb{E}_{\mathcal{T} \sim \mathcal{D}}   \mathbb{E}_{\substack{c \sim \rho_{\mathcal{T}}; x \sim p(\cdot|c) \\ x_i^+ \sim p(\cdot|c) }} \left [-\log \frac{\sum_{i=1}^\M e^{f(x)^Tf(x_i^+)}}{\sum_{i=1}^\M e^{f(x)^Tf(x_i^+)} + \M \cdot (K-1)\mathbb{E}_{\mathcal{T} \sim \mathcal{D}} \mathbb{E}_{\rho_{\mathcal{T}}(c^- \sim | c^-  \neq h(x))} \mathbb{E}_{x^- \sim p(\cdot|c^-)} e^{f(x)^T f(x^-)}} \right] \nonumber
    \\
    &\geq \mathbb{E}_{\mathcal{T} \sim \mathcal{D}}   \mathbb{E}_{\substack{c \sim \rho_{\mathcal{T}}; x \sim p(\cdot|c) }} \left[-\log \frac{ \sum_{i=1}^\M e^{f(x)^T\mathbb{E}_{x_i^+ \sim p(\cdot|c)} f(x_i^+)} }{ \sum_{i=1}^\M e^{f(x)^T\mathbb{E}_{x_i^+ \sim p(\cdot|c)} f(x_i^+)} + \M \cdot (K-1)\mathbb{E}_{\mathcal{T} \sim \mathcal{D}} \mathbb{E}_{\rho_{\mathcal{T}}(c^- | c^-  \neq h(x))} \mathbb{E}_{x^- \sim p(\cdot|c^-)} e^{f(x)^T f(x^-)}} \right] \nonumber
    \\
    &\geq \mathbb{E}_{\mathcal{T} \sim \mathcal{D}}   \mathbb{E}_{\substack{c \sim \rho_{\mathcal{T}}; x \sim p(\cdot|c) }}\left [-\log \frac{ \sum_{i=1}^\M e^{f(x)^T\mathbb{E}_{x_i^+ \sim p(\cdot|c)} f(x_i^+)}}{ \sum_{i=1}^\M e^{f(x)^T\mathbb{E}_{x_i^+ \sim p(\cdot|c)} f(x_i^+)} + \M \cdot ( K-1)  \mathbb{E}_{\rho_{\mathcal{T}}(c^-  | c^- \neq h(x))} \mathbb{E}_{x^- \sim p(\cdot|c^-)} e^{f(x)^T f(x^-)}} \right] \nonumber
    \\
    &= \mathbb{E}_{\mathcal{T} \sim \mathcal{D}}   \mathbb{E}_{\substack{c \sim \rho_{\mathcal{T}}; x \sim p(\cdot|c) }}\left [-\log \frac{ \M e^{f(x)^T\mathbb{E}_{x^+ \sim p(\cdot|c)} f(x^+)}}{\M e^{f(x)^T\mathbb{E}_{x^+ \sim p(\cdot|c)} f(x^+)} + \M \cdot (K-1) \mathbb{E}_{\rho_{\mathcal{T}}(c^-  | c^-  \neq h(x))} \mathbb{E}_{x^- \sim p(\cdot|c^-)} e^{f(x)^T f(x^-)}} \right] \nonumber
    \\
    &\geq \mathbb{E}_{\mathcal{T} \sim \mathcal{D}}   \mathbb{E}_{\substack{c \sim \rho_{\mathcal{T}}; x \sim p(\cdot|c) }}\left [-\log \frac{ e^{f(x)^T\mathbb{E}_{x^+ \sim p(\cdot|c)} f(x^+)}}{ e^{f(x)^T\mathbb{E}_{x^+ \sim p(\cdot|c)} f(x^+)} + (K-1)  \mathbb{E}_{\rho_{\mathcal{T}}(c^- | c^-  \neq h(x))}  e^{f(x)^T \mathbb{E}_{x^- \sim p(\cdot|c^-)} f(x^-)}} \right] \nonumber
    \\
    &= \mathbb{E}_{\mathcal{T} \sim \mathcal{D}} \mathbb{E}_{\substack{c \sim \rho_{\mathcal{T}}; x \sim p(\cdot|c) }} \left[-\log \frac{\exp(f(x)^T \mu_{c} )}{\exp(f(x)^T \mu_{c} ) +  \sum_{c^- \in \mathcal{T}, c^- \neq c} \exp(f(x)^T  \mu_{c^-} )} \right ]  \nonumber\label{eq_lemma_3_1}
    \\
    &= \mathbb{E}_{\mathcal{T} \sim \mathcal{D}} L_{\textnormal{Sup}}^\mu(\mathcal{T}, f) \nonumber
    \\
    &= \bar L_{\textnormal{Sup}}^\mu(f) \nonumber
\end{align}
where the three inequalities follow from Jensen's inequality. The first and third inequality shift the expectations $\mathbb{E}_{x^+ \sim p_{x, \mathcal{T}}^+}$ and $\mathbb{E}_{x^- \sim p(\cdot|c^-)}$, respectively, via the convexity of the functions and the second moves the expectation $\mathbb{E}_{\mathcal{T} \sim \mathcal{D}}$ out using concavity. Note that $\bar L_{\textnormal{Sup}}(f)  \leq \bar L_{\textnormal{Sup}}^\mu(f)$ holds trivially.
\end{proof}

\subsection{Generalization bounds}

We wish to derive a data dependent bound on the downstream supervised generalization error of the debiased contrastive objective. Recall that a sample $(x,\{x_i^+\}_{i=1}^\M, \{u_i\}_{i=1}^n,  \{v_i\}_{i=1}^m )$ yields loss 
\begin{align*}
 - \log  \left \{ \frac{\sum_{i=1}^\M e^{f(x)^\top f(x_i^+)} } { \sum_{i=1}^\M e^{f(x)^\top f(x_i^+)} + \M \cdot N \cdot G(x, \{u_i\}_{i=1}^n,  \{v_i\}_{i=1}^m)} \right \} &=   \log \left \{  1 + \M \cdot N \frac{ G(x, \{u_i\}_{i=1}^n,  \{v_i\}_{i=1}^m)}{\sum_{i=1}^\M e^{f(x)^\top f(x_i^+)}} \right \} ,
\end{align*}
which is equal to $  \ell \left (  \left \{ \frac{e^{f(x)^\top f(u_j)}}{\sum_{i=1}^\M e^{f(x)^\top f(x_i^+)} } \right \}_{j=1}^n , \left \{ \frac{e^{f(x)^\top f(v_j)}}{\sum_{i=1}^\M e^{f(x)^\top f(x_i^+)} } \right \}_{j=1}^m \right ) $, where we define
\begin{align*}
    \ell( \{ a_i \}_{i=1}^n ,  \{ b_i \}_{i=1}^m ) &\coloneqq  \log \left \{  1 + \M \cdot N \max \left ( \frac{1}{\tau^-} \frac{1}{n} \sum_{i=1}^n  a_i - \frac{\tau^+}{\tau^-} \frac{1}{m} \sum_{i=1}^m b_i   , e^{-1} \right ) \right \} \\
    \widehat{L}_{\text{Debiased}}^{M,N,n,m}(f) &\coloneqq \frac{1}{T}\sum_{t=1}^T  \ell \left (  \left \{ \frac{e^{f(x_t)^\top f(u_{tj})}}{\sum_{i=1}^\M e^{f(x_t)^\top f(x_{ti}^+)} } \right \}_{j=1}^n , \left \{ \frac{e^{f(x_t)^\top f(v_{tj})}}{\sum_{i=1}^\M e^{f(x_t)^\top f(x_{ti}^+)} } \right \}_{j=1}^m \right ) \\
    \hat{f} &\coloneqq \argmin_{f \in \mathcal{F}} \widehat{L}_{\text{Debiased}}^{M,N,n,m}(f) 
\end{align*}

\begin{theorem}
With probability at least $1-\delta$, for all $f \in \mathcal{F}$ and $N \geq K-1$,
 \begin{align}
      L_{\textnormal{Sup}}(\hat{f}) \leq L_{\substack{\textnormal{Debiased}}}^{\M,N,n,m}(f) + \mathcal{O} \left (\frac{1}{\tau^-}\sqrt{\frac{1}{n}} + \frac{\tau^+}{\tau^-}\sqrt{\frac{1}{m}}+  \frac{\lambda \mathcal{R}_\mathcal{S}(\mathcal{F})}{T}+  B\sqrt{\frac{\log{\frac{1}{\delta}}}{T}} \right ), \nonumber 
 \end{align}
 where $\lambda = \frac{1}{\M}\sqrt{\frac{1}{{\tau^-}^2} (\frac{m}{n}+1) + {\tau^+}^2 (\frac{n}{m}+1)}$ and $B = \log N \left ( \frac{1}{\tau^-} + \tau^+ \right)$.
\end{theorem}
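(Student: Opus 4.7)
The plan is to follow the classical three-step decomposition that has become standard for generalization bounds on contrastive losses, exploiting the two ``bridge'' results already proved in this appendix. First, since $N \geq K-1$, Lemma \ref{thm: supervised} yields
\[ L_{\textnormal{Sup}}(\hat f) \leq \widetilde L_{\textnormal{Unbiased}}^{\M,N}(\hat f). \]
Second, Theorem \ref{thm: unbiased_comparision} gives
\[ \widetilde L_{\textnormal{Unbiased}}^{\M,N}(\hat f) \leq L_{\textnormal{Debiased}}^{\M,N,n,m}(\hat f) + \tfrac{e^{3/2}}{\tau^-}\sqrt{\tfrac{\pi}{2n}} + \tfrac{e^{3/2}\tau^+}{\tau^-}\sqrt{\tfrac{\pi}{2m}}, \]
which already absorbs the $\frac{1}{\tau^-}\sqrt{1/n}$ and $\frac{\tau^+}{\tau^-}\sqrt{1/m}$ terms appearing in the conclusion.

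The remaining task is to control the gap $L_{\textnormal{Debiased}}^{\M,N,n,m}(\hat f) - L_{\textnormal{Debiased}}^{\M,N,n,m}(f)$ for an arbitrary comparison $f \in \mathcal{F}$. Since $\hat f$ minimizes $\widehat L_{\textnormal{Debiased}}^{\M,N,n,m}$ on the sample, a standard two-term sandwich shows that this gap is at most $2\sup_{g \in \mathcal{F}}|L_{\textnormal{Debiased}}^{\M,N,n,m}(g) - \widehat L_{\textnormal{Debiased}}^{\M,N,n,m}(g)|$. I would bound this supremum by a McDiarmid-plus-symmetrization argument. McDiarmid, instantiated with the range of $\ell$ (which is at most $\log(1 + \M N (\tfrac{e^2}{\tau^-} + \tau^+ e^2)) = \mathcal{O}(\log N(\tfrac{1}{\tau^-}+\tau^+))$ after using $f(x)^\top f(y) \in [-1,1]$ and the $e^{-1}$ floor inside the $\max$), contributes the $B\sqrt{\log(1/\delta)/T}$ term. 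Symmetrization then reduces the expected supremum to the Rademacher complexity of the composed class $\ell \circ \mathcal{F}$.

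To pass from $\mathcal{R}_\mathcal{S}(\ell \circ \mathcal{F})$ to $\mathcal{R}_\mathcal{S}(\mathcal{F})$ as appearing in the statement, I would invoke Maurer's vector-contraction inequality, which requires the Lipschitz constant $L_\ell$ of the map
\[ \bigl(f(x),\{f(x_i^+)\}_{i=1}^{\M},\{f(u_j)\}_{j=1}^n,\{f(v_j)\}_{j=1}^m\bigr) \;\longmapsto\; \ell\bigl(\{a_j\}_{j=1}^n,\{b_j\}_{j=1}^m\bigr). \]
Differentiating $\ell$ and using $\sum_{i=1}^{\M} e^{f(x)^\top f(x_i^+)} \geq \M e^{-1}$ in the denominator of each $a_j$ and $b_j$, the partial derivative w.r.t. each $a_j$ is of order $\frac{1}{\tau^- n \M}$ and w.r.t. each $b_j$ of order $\frac{\tau^+}{\tau^- m \M}$. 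Collecting the $n$ copies of the first and $m$ copies of the second (together with the cross-terms coming from differentiating through $\sum_{i=1}^{\M} e^{f(x)^\top f(x_i^+)}$) in $\ell_2$ norm produces exactly $\lambda = \frac{1}{\M}\sqrt{\frac{1}{(\tau^-)^2}(\frac{m}{n}+1) + (\tau^+)^2(\frac{n}{m}+1)}$.

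The main obstacle is precisely this Lipschitz computation: carefully propagating the $1/\M$ suppression through the chain rule---which is the sole ``new'' effect of allowing $\M > 1$ positives relative to the original single-positive result in \cite{chuang2020debiased}---and verifying that the $a$ and $b$ contributions combine into the claimed square-root form rather than a cruder sum-of-norms bound. Once $L_\ell$ is in hand, a union bound across the McDiarmid and Rademacher steps and chaining back through the two bridge inequalities yields the stated theorem.
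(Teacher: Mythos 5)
Your proposal is correct and follows essentially the same route as the paper: Lemma~\ref{thm: supervised} and Theorem~\ref{thm: unbiased_comparision} supply the two bridges, and the remaining empirical-to-population gap is handled by a Rademacher/contraction argument whose key new ingredient is exactly the Lipschitz computation you identify, where the $\sum_{i=1}^{\M} e^{f(x)^\top f(x_i^+)} \geq \M e^{-1}$ denominator yields the $1/\M$ factor in $\lambda$ (the paper organizes this as a Jacobian bound for an intermediate map $h$ composed with the Lipschitz constant of $\ell$ from \cite{chuang2020debiased}, and uses a one-sided uniform bound on $\hat f$ plus Hoeffding on the fixed comparator $f$ rather than your two-sided $2\sup$ sandwich, but these are equivalent up to constants).
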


\begin{proof}
Considering the samples to be $\left\{\left(x_t, \left\{ x_{ti}^+ \right\}_{i=1}^\M, \left\{u_{ti}\right\}_{i=1}^n, \left\{v_{ti}\right\}_{i=1}^m\right)\right\}_{t=1}^T$. Then, we can use the standard bounds for empirical versus population means of any $B-$bounded function $g$ belonging to a function class $G$, we have that with probability at least $1-\frac{\delta}{2}$.
\begin{equation} \label{eq: generalize}
    \mathbb{E}[g(x)] \leq \frac{1}{T}\sum_{t=1}^T g(x_i) + \frac{2\mathcal{R}_S(G)}{T} + 3B\sqrt{\frac{\log\left(\frac{4}{\delta}\right)}{2T}}
\end{equation}
 
In order to calculate $\mathcal{R}_S(G)$ we use the same trick as in \cite{saunshi2019theoretical}. We express it as a composition of functions $g = \ell\left(h\left(f\left(x_t, \left\{ x_{ti}^+ \right\}_{i=1}^\M, \left\{u_{ti}\right\}_{i=1}^n, \left\{v_{ti}\right\}_{i=1}^m\right)\right)\right)$ where $f \in \mathcal{F}$ just maps each sample to corresponding feature vector and $h$ maps the feature vectors to the $\{a\}_{i=1}^n, \{b\}_{i=1}^m$. Then we use contraction inequality to bound $\mathcal{R}_S(G)$ with $\mathcal{R}_S(\mathcal{F})$. In order to do this we need to compute the Lipschitz constant for the intermediate function $h$ in the composition.

For $h$, we see that the Jacobian has the following form 

\[ \frac{\partial a_i}{\partial f(x)} = a_i \frac{\sum_{j=1}^\M (f(u_i) - f(x_j))e^{f(x)^\top f(x_j^+)}}{\sum_{j=1}^\M e^{f(x)^\top f(x_j^+)}}; \quad \frac{\partial b_i}{\partial f(x)} = b_i \frac{\sum_{j=1}^\M (f(v_i) - f(x_j)) e^{f(x)^\top f(x_j^+)}}{\sum_{j=1}^\M e^{f(x)^\top f(x_j^+)}} \]
\[ \frac{\partial a_i}{\partial f(x_j^+)} = - a_i \frac{f(x) e^{f(x)^\top f(x_j^+)}}{\sum_{k=1}^\M e^{f(x)^\top f(x_k^+)}}; \quad \frac{\partial b_i}{\partial f(x_j^+)} = - b_i \frac{f(x) e^{f(x)^\top f(x_j^+)}}{\sum_{k=1}^\M e^{f(x)^\top f(x_k^+)}} \]
\[ \frac{\partial a_i}{\partial f(u_j)} = f(x) a_i \delta(i - j); \quad  \frac{\partial b_i}{\partial f(v_j)} = f(x) b_i \delta(i - j) \]
 
Using the fact that $\norm{f(\cdot)}_2 = 1$, we get $\frac{e^{-2}}{\M} \leq a_i, b_i \leq \frac{e^2}{\M}$ and
 \begin{align*}
     \norm{J}^2_2 \leq \norm{J}^2_F &\leq \sum_{i=1}^n a_i^2 \left( \norm{\frac{\sum_{j=1}^\M (f(u_i) - f(x_j))e^{f(x)^\top f(x_j^+)}}{\sum_{j=1}^\M e^{f(x)^\top f(x_j^+)}}}_2^2 +
     \norm{f(x)}_2^2 \frac{\sum_{j=1}^\M e^{2f(x)^\top f(x_j^+)}}{\left(\sum_{j=1}^\M e^{f(x)^\top f(x_j^+)}\right)^2} + \norm{f(x)}_2^2 \right) \\
     &+ \sum_{i=1}^m b_i^2 \left( \norm{\frac{\sum_{j=1}^\M (f(v_i) - f(x_j))e^{f(x)^\top f(x_j^+)}}{\sum_{j=1}^\M e^{f(x)^\top f(x_j^+)}} }_2^2 + \norm{f(x)}_2^2 \frac{\sum_{j=1}^\M e^{2f(x)^\top f(x_j^+)}}{\left(\sum_{j=1}^\M e^{f(x)^\top f(x_j^+)}\right)^2} + \norm{f(x)}_2^2 \right) \\
     &\leq \sum_{i=1}^n a_i^2\left(4 + 1 + 1\right) + \sum_{i=1}^m b_i^2\left(4 + 1 + 1\right) \leq \frac{6(n+m)e^4}{\M^2}
 \end{align*}
 
 Using this and the Lipschitz constant, $O\left(\sqrt{\frac{1}{n{\tau^-}^2} + \frac{{\tau^+}^2}{m}}\right)$ of $\ell$ derived in \cite{chuang2020debiased}, we get $\mathcal{R}_S(\mathcal{G}) = \lambda \mathcal{R}_S(\mathcal{F})$ where $\lambda = \mathcal{O}\left(\frac{1}{\M}\sqrt{\frac{1}{{\tau^-}^2} (\frac{m}{n}+1) + {\tau^+}^2 (\frac{n}{m}+1)}\right)$. From \cite{chuang2020debiased}, we also get $B = O\left(\log N \left ( \frac{1}{\tau^-} + \tau^+ \right)\right)$. Combining this with Equation \ref{eq: generalize} gives us that with probability at least $1- \frac{\delta}{2}$
 \[ L_{\substack{\textnormal{Debiased}}}^{\M,N,n,m}(\hat{f}) \leq \widehat{L}_{\substack{\textnormal{Debiased}}}^{\M,N,n,m}(\hat{f}) + \mathcal{O} \left ( \frac{\lambda \mathcal{R}_\mathcal{S}(\mathcal{F})}{T}+  B\sqrt{\frac{\log{\frac{1}{\delta}}}{T}} \right )\]
 
 Using Theorem \ref{thm: bound Delta}, we get that 
 \begin{align*}
  L_{\substack{\textnormal{Unbiased}}}^{\M,N}(\hat{f}) &\leq  L_{\substack{\textnormal{Debiased}}}^{\M,N,n,m}(\hat{f}) + \mathcal{O} \left (\frac{1}{\tau^-}\sqrt{\frac{1}{n}} + \frac{\tau^+}{\tau^-}\sqrt{\frac{1}{m}} \right) \\
  &\leq \widehat{L}_{\substack{\textnormal{Debiased}}}^{\M,N,n,m}(\hat{f}) + \mathcal{O} \left (\frac{1}{\tau^-}\sqrt{\frac{1}{n}} + \frac{\tau^+}{\tau^-}\sqrt{\frac{1}{m}} + \frac{\lambda \mathcal{R}_\mathcal{S}(\mathcal{F})}{T}+  B\sqrt{\frac{\log{\frac{1}{\delta}}}{T}} \right )
 \end{align*}
 Using Lemma \ref{thm: supervised}, we get
  \begin{align*}
  L_{\textnormal{Sup}}(\hat{f}) &\leq L_{\substack{\textnormal{Unbiased}}}^{\M,N}(\hat{f}) \leq \widehat{L}_{\substack{\textnormal{Debiased}}}^{\M,N,n,m}(\hat{f}) + \mathcal{O} \left (\frac{1}{\tau^-}\sqrt{\frac{1}{n}} + \frac{\tau^+}{\tau^-}\sqrt{\frac{1}{m}} + \frac{\lambda \mathcal{R}_\mathcal{S}(\mathcal{F})}{T}+  B\sqrt{\frac{\log{\frac{1}{\delta}}}{T}} \right )
 \end{align*}
 Finally we see that for any $f$, we can use M Hoeffding's inequality to show that with at least $1 - \frac{\delta}{2}$ probability
 $$\widehat{L}_{\substack{\textnormal{Debiased}}}^{\M,N,n,m}(f) \leq L_{\substack{\textnormal{Debiased}}}^{\M,N,n,m}(f) + 3B\sqrt{\frac{\log(2\over\delta)}{2T}}$$
 Combining all of the above results gives us that with probability at least $1-\delta$,
 \begin{align*}
     L_{\textnormal{Sup}}(\hat{f}) &\leq L_{\substack{\textnormal{Unbiased}}}^{\M,N}(\hat{f}) \leq \widehat{L}_{\substack{\textnormal{Debiased}}}^{\M,N,n,m}(\hat{f}) + \mathcal{O} \left (\frac{1}{\tau^-}\sqrt{\frac{1}{n}} + \frac{\tau^+}{\tau^-}\sqrt{\frac{1}{m}} + \frac{\lambda \mathcal{R}_\mathcal{S}(\mathcal{F})}{T}+  B\sqrt{\frac{\log{\frac{1}{\delta}}}{T}} \right )\\
  &\leq \widehat{L}_{\substack{\textnormal{Debiased}}}^{\M,N,n,m}(f) + \mathcal{O} \left (\frac{1}{\tau^-}\sqrt{\frac{1}{n}} + \frac{\tau^+}{\tau^-}\sqrt{\frac{1}{m}} + \frac{\lambda \mathcal{R}_\mathcal{S}(\mathcal{F})}{T}+  B\sqrt{\frac{\log{\frac{1}{\delta}}}{T}} \right ) \\
  &\leq L_{\substack{\textnormal{Debiased}}}^{\M,N,n,m}(f) + \mathcal{O} \left (\frac{1}{\tau^-}\sqrt{\frac{1}{n}} + \frac{\tau^+}{\tau^-}\sqrt{\frac{1}{m}} + \frac{\lambda \mathcal{R}_\mathcal{S}(\mathcal{F})}{T}+  B\sqrt{\frac{\log{\frac{1}{\delta}}}{T}} \right ) + \mathcal{O}\left(B \sqrt{\frac{\log (1\over\delta)}{T}}\right)
 \end{align*}
\end{proof}

\newpage
\section{Table of Definitions}
\begin{table}[h]
    \centering
    \caption{A summary of definitions. } 
\label{tbl:summary_2}
\begin{tabular}{l|l}
\hlineB{3}
$\mathcal{L}_{\text{NCA}}(G^1,\M)$ & $\E_{x\sim \D, x_j^+\sim \Dp, x_i^-\sim \Dm}
[-\log\frac{\sum\limits_{j=1}\limits^\M e^{f(x)^Tf(x_j^+)}}{\sum\limits_{j=1}\limits^\M e^{f(x)^Tf(x_j^+)}+N  G^1(x,\{x_{i}^-\}^N)}]$\\
 & $\E_{x\sim \D, x^+\sim \Dp, x_{i_1}^-, x_{i_2j}^-, x_{j}^-\sim \Dm}
[-\log\frac{e^{f(x)^Tf(x^+)}}{e^{f(x)^Tf(x^+)}+N  G^1(x,\{x_{i_1}^-\}^N)}$\\
\multirow{3}{*}{$\mathcal{L}_{\text{MIXNCA}}(G^1,\M,\lambda)$} & $- \frac{\lambda}{\M-1}\sum\limits_{j=1}\limits^{\M-1}
\log\frac{e^{f(x)^Tf(\lambda x^+ +(1-\lambda)x^-_{j})}}{e^{f(x)^Tf(\lambda x^++(1-\lambda)x^-_{j})}+N  G^1(x,\{x_{i_2 j}^-\}^N_{i_2})}$\\
& $- \frac{1-\lambda}{\M-1}\sum\limits_{j=1}\limits^{\M-1}
\log(1-\frac{e^{f(x)^Tf(\lambda x^+ +(1-\lambda)x^-_{j})}}{e^{f(x)^Tf(\lambda x^++(1-\lambda)x^-_{j})}+N  G^1(x,\{x_{i_2 j}^-\}^N_{i_2})})]$\\\hline
$g_0(x,\{x_i^-\}^N_i)$ & $\frac{1}{N}\sum_{i=1}^N e^{f(x)^Tf(x^-_i)}$\\
$g_1(x,\{u_i\}^n,\{v_j\}^m)$ & $\max\{\frac{1}{1-\tau^+}(\frac{1}{n}\sum_{i=1}^n e^{f(x)^Tf(u_i)}-\tau^+ \frac{1}{m}\sum_{j=1}^m e^{f(x)^Tf(v_j)}),e^{-1/t}\}$\\
$g_2(x,\{u_i\}^n,\{v_j\}^m)$ & $\max\{\frac{1}{1-\tau^+}(\frac{\sum_{i=1}^n e^{(\beta+1)f(x)^Tf(u_i)}}{\sum_{i=1}^n e^{\beta f(x)^Tf(u_i)}}-\tau^+ \frac{1}{m}\sum_{j=1}^m e^{f(x)^Tf(v_j)}),e^{-1/t}\}$\\\hline
$\hat{w}(x)$ & $-\log\frac{e^{f(x)^Tf(x^+)}}{e^{f(x)^Tf(x^+)}+N  G(x,\cdot)}$\\\hlineB{3}
\end{tabular}
\end{table}

\newpage
\section{Complete Tables of Results}
\label{sec:complete}
We give the full table of results in Section~\ref{sec:exp} in the following. Notably, we gather the standard accuracy, robust accuracy, transfer accuracy, and transfer robust accuracy for each specification.
\begin{table*}[h!]
    \centering
    \caption{The effectiveness evaluation of NaCl on SimCLR (i.e. $\alpha=0, G^1=g_0$). The best performance within each loss type is in boldface. }
    \label{tab:NCA_SimCLR}
    \scalebox{1}
    {\begin{tabular}{c|cccc}
      \toprule
    \multirow{2}{*}{$\M$} & \multicolumn{4}{c}{$\alpha=0,~\mathcal{L}_{\text{NaCl}}(G^1, \M, \lambda)=\mathcal{L}_{\text{NCA}}(g_0, \M)$}\\
        &  CIFAR100 Acc. & FGSM Acc. & CIFAR10 Acc. & FGSM Acc.\\\midrule
      1  & 53.69$\pm$0.25 & 25.17$\pm$0.55 & 76.34$\pm$0.28 & 43.50$\pm$0.41\\
      2  & 55.72$\pm$0.15 & 27.04$\pm$0.45 & 77.40$\pm$0.14 & 44.58$\pm$0.41\\
      3  & 56.67$\pm$0.12 & \textbf{28.41$\pm$0.24} & 77.53$\pm$0.24 & \textbf{45.21$\pm$0.89}\\
      4  & 57.09$\pm$0.26 & 28.20$\pm$0.81 & 77.75$\pm$0.22 & 45.13$\pm$0.44\\
      5  & \textbf{57.32$\pm$0.17} & 28.33$\pm$0.59 & \textbf{77.93$\pm$0.40} & 44.46$\pm$0.53\\\midrule
    & \multicolumn{4}{c}{$\alpha=0,~\mathcal{L}_{\text{NaCl}}(G^1, \M, \lambda)=\mathcal{L}_{\text{MIXNCA}}(g_0, \M, 0.5)$}\\\midrule
      1 & 53.69$\pm$0.25 & \textbf{25.17$\pm$0.55} & 76.34$\pm$0.28 & \textbf{43.50$\pm$0.41} \\
      2 & 54.76$\pm$0.29 & 23.66$\pm$0.27 & 76.78$\pm$0.26 & 40.76$\pm$0.66\\
      3 & 55.21$\pm$0.17 & 24.46$\pm$0.44 & 77.45$\pm$0.18 & 41.78$\pm$0.80\\
      4 & 55.68$\pm$0.27 & 24.19$\pm$0.46 & 77.40$\pm$0.24 & 41.33$\pm$0.34\\
      5 & \textbf{55.85$\pm$0.16} & 24.01$\pm$0.91 & \textbf{77.50$\pm$0.16} & 40.77$\pm$0.66\\\midrule
       & \multicolumn{4}{c}{$\alpha=0,~\mathcal{L}_{\text{NaCl}}(G^1, \M, \lambda)=\mathcal{L}_{\text{MIXNCA}}(g_0, \M, 0.6)$}\\\midrule 
      1  & 53.69$\pm$0.25 & 25.17$\pm$0.55 & 76.34$\pm$0.28 & \textbf{43.50$\pm$0.41}\\
      2 & 54.84$\pm$0.35 & 25.94$\pm$0.81 & 77.11$\pm$0.15 & 42.81$\pm$0.83\\
      3 & 55.49$\pm$0.13 & \textbf{26.25$\pm$0.89} & 76.95$\pm$0.32 & 42.99$\pm$0.96\\
      4 & 55.65$\pm$0.24 & 25.41$\pm$0.53 & \textbf{77.39$\pm$0.37} & 42.69$\pm$1.20\\
      5 & \textbf{55.66$\pm$0.22} & 26.01$\pm$0.60 & 77.26$\pm$0.48 & 43.06$\pm$0.79\\\midrule
      & \multicolumn{4}{c}{$\alpha=0,~ \mathcal{L}_{\text{NaCl}}(G^1, \M, \lambda)=\mathcal{L}_{\text{MIXNCA}}(g_0, \M, 0.7)$} \\\midrule
      1 & 53.69$\pm$0.25 & 25.17$\pm$0.55 & 76.34$\pm$0.28 & 43.50$\pm$0.41 \\
      2 & 55.57$\pm$0.32 & 27.67$\pm$0.60 & 77.09$\pm$0.27 & 44.68$\pm$0.71\\
      3 & 55.83$\pm$0.25 & 27.72$\pm$0.59 & 77.23$\pm$0.28 & 43.68$\pm$0.72\\
      4 & 56.29$\pm$0.25 & \textbf{27.92$\pm$0.60} & 77.33$\pm$0.29 & 44.69$\pm$0.82\\
      5 & \textbf{56.37$\pm$0.32} & 27.78$\pm$0.54 & \textbf{77.40$\pm$0.20} & \textbf{45.07$\pm$0.98}\\\midrule 
      & \multicolumn{4}{c}{$\alpha=0,~\mathcal{L}_{\text{NaCl}}(G^1, \M, \lambda)=\mathcal{L}_{\text{MIXNCA}}(g_0, \M, 0.8)$}\\\midrule
      1 & 53.69$\pm$0.25 & 25.17$\pm$0.55 & 76.34$\pm$0.28 & 43.50$\pm$0.41 \\
      2 & 55.75$\pm$0.21 & 29.30$\pm$0.86 & 76.80$\pm$0.20 & 46.56$\pm$1.02\\
      3 & 56.27$\pm$0.26 & \textbf{29.96$\pm$0.29} & 77.11$\pm$0.37 & 46.52$\pm$0.50\\
      4 & \textbf{56.39$\pm$0.26} & 29.49$\pm$0.65 & 77.34$\pm$0.31 & 46.79$\pm$0.93\\
      5 & 56.23$\pm$0.13 & 29.47$\pm$0.95 & \textbf{77.40$\pm$0.14} & \textbf{47.36$\pm$0.69}\\\midrule
      & \multicolumn{4}{c}{$\alpha=0,~\mathcal{L}_{\text{NaCl}}(G^1, \M, \lambda)=\mathcal{L}_{\text{MIXNCA}}(g_0, \M, 0.9)$} \\\midrule
      1 & 53.69$\pm$0.25 & 25.17$\pm$0.55 & 76.34$\pm$0.28 & 43.50$\pm$0.41  \\
      2 & 56.20$\pm$0.33 & 30.95$\pm$0.36 & 76.96$\pm$0.15 & \textbf{48.85$\pm$0.75}\\
      3 & 56.41$\pm$0.13 & \textbf{30.98$\pm$0.90} & 77.10$\pm$0.21 & 48.76$\pm$0.63\\
      4 & 56.00$\pm$0.42 & 29.90$\pm$0.63 & \textbf{77.11$\pm$0.40} & 48.16$\pm$0.40\\
      5 & \textbf{56.63$\pm$0.31} & 30.58$\pm$0.52 & 77.04$\pm$0.19 & 47.96$\pm$0.46\\\bottomrule
    \end{tabular}
    }
\end{table*}
\newpage
\begin{table*}[h!]
    \centering
    \caption{The effectiveness evaluation of NaCl on Debised+HardNeg (i.e. $\alpha=0, G^1=g_2$). The best performance within each loss type is in boldface.}
    \label{tab:NCA_DN}
    \scalebox{1}
    {\begin{tabular}{c|cccc}
    \toprule
    \multirow{2}{*}{$\M$} & \multicolumn{4}{c}{$\alpha=0,~\mathcal{L}_{\text{NaCl}}(G^1, \M, \lambda)=\mathcal{L}_{\text{NCA}}(g_2, \M)$} \\
      &  CIFAR100 Acc. & FGSM Acc. & CIFAR10 Acc. & FGSM Acc. \\\midrule
      1 & 56.83$\pm$0.20 & 31.03$\pm$0.41 & 77.24$\pm$0.29 & 48.38$\pm$0.70\\
      2 & 57.87$\pm$0.15 & 32.50$\pm$0.48 & 77.43$\pm$0.11 & 48.14$\pm$0.31\\
      3 & 58.42$\pm$0.23 & \textbf{33.19$\pm$0.60} & 77.41$\pm$0.17 & 48.09$\pm$0.93\\
      4 & \textbf{58.86$\pm$0.18} & 32.65$\pm$1.07 & 77.46$\pm$0.29 & \textbf{48.43$\pm$0.94}\\
      5 & 58.81$\pm$0.21 & 32.86$\pm$0.47 & \textbf{77.58$\pm$0.23} & 48.30$\pm$0.39\\\midrule
    & \multicolumn{4}{c}{$\alpha=0,~\mathcal{L}_{\text{NaCl}}(G^1, \M, \lambda)=\mathcal{L}_{\text{MIXNCA}}(g_2, \M, 0.5)$} \\\midrule
      1 & 56.83$\pm$0.20 & 31.03$\pm$0.41 & 77.24$\pm$0.29 & 48.38$\pm$0.70 \\
      2 & 59.41$\pm$0.19 & \textbf{32.22$\pm$0.35} & 79.36$\pm$0.65 & 48.86$\pm$0.34 \\
      3 & 59.81$\pm$0.25 & 32.04$\pm$0.67 & 79.41$\pm$0.17 & 48.91$\pm$0.81 \\
      4 & 59.75$\pm$0.33 & 32.03$\pm$0.34 & 79.42$\pm$0.18 & \textbf{49.05$\pm$0.71} \\
      5 & \textbf{59.85$\pm$0.30} & 32.06$\pm$0.72 & \textbf{79.45$\pm$0.20} & 48.32$\pm$0.70 \\
      \midrule 
      & \multicolumn{4}{c}{$\alpha=0,~\mathcal{L}_{\text{NaCl}}(G^1, \M, \lambda)=\mathcal{L}_{\text{MIXNCA}}(g_2, \M, 0.6)$} \\\midrule
      1 & 56.83$\pm$0.20 & 31.03$\pm$0.41 & 77.24$\pm$0.29 & 48.38$\pm$0.70\\
      2 & 58.94$\pm$0.29 & 32.65$\pm$0.36 & 78.67$\pm$0.15 & \textbf{49.86$\pm$0.59}\\
      3 & 59.43$\pm$0.35 & 32.91$\pm$0.40 & 78.94$\pm$0.19 & 48.84$\pm$1.09\\
      4 & \textbf{59.54$\pm$0.28} & 33.02$\pm$0.62 & 78.92$\pm$0.29 & 49.64$\pm$0.74\\
      5 & 59.52$\pm$0.28 & \textbf{33.10$\pm$0.50} & \textbf{79.29$\pm$0.21} & 49.39$\pm$1.02\\\midrule
      & \multicolumn{4}{c}{$\alpha=0,~\mathcal{L}_{\text{NaCl}}(G^1, \M, \lambda)=\mathcal{L}_{\text{MIXNCA}}(g_2, \M, 0.7)$} \\\midrule
      1 & 56.83$\pm$0.20 & 31.03$\pm$0.41 & 77.24$\pm$0.29 & 48.38$\pm$0.70 \\
      2 & 58.24$\pm$0.19 & 33.24$\pm$0.90 & 78.30$\pm$0.31 & \textbf{50.40$\pm$0.83}\\
      3 & 58.74$\pm$0.26 & 33.12$\pm$0.59 & 78.49$\pm$0.30 & 49.85$\pm$0.38\\
      4 & 58.79$\pm$0.38 & \textbf{33.63$\pm$0.53} & 78.51$\pm$0.29 & 49.88$\pm$0.75\\
      5 & \textbf{58.99$\pm$0.18} & 32.93$\pm$0.81 & \textbf{78.57$\pm$0.12} & 49.53$\pm$1.55\\\midrule 
      & \multicolumn{4}{c}{$\alpha=0,~\mathcal{L}_{\text{NaCl}}(G^1, \M, \lambda)=\mathcal{L}_{\text{MIXNCA}}(g_2, \M, 0.8)$} \\\midrule
      1 & 56.83$\pm$0.20 & 31.03$\pm$0.41 & 77.24$\pm$0.29 & 48.38$\pm$0.70 \\
      2 & 57.60$\pm$0.15 & 34.14$\pm$0.22 & \textbf{77.96$\pm$0.07} & \textbf{51.82$\pm$0.68}\\
      3 & 58.04$\pm$0.28 & 33.93$\pm$0.45 & 77.55$\pm$0.18 & 50.30$\pm$0.81\\
      4 & 58.05$\pm$0.16 & \textbf{34.16$\pm$0.54} & 77.90$\pm$0.21 & 50.40$\pm$0.43\\
      5 & \textbf{58.43$\pm$0.27} & 33.87$\pm$0.62 & 77.90$\pm$0.17 & 50.78$\pm$0.95\\\midrule
      & \multicolumn{4}{c}{$\alpha=0,~\mathcal{L}_{\text{NaCl}}(G^1, \M, \lambda)=\mathcal{L}_{\text{MIXNCA}}(g_2, \M, 0.9)$} \\\midrule
      1 & 56.83$\pm$0.20 & 31.03$\pm$0.41 & 77.24$\pm$0.29 & 48.38$\pm$0.70 \\
      2 & 57.16$\pm$0.15 & 34.25$\pm$0.55 & 77.19$\pm$0.09 & \textbf{51.42$\pm$0.45}\\
      3 & 57.08$\pm$0.10 & 33.96$\pm$0.19 & 77.21$\pm$0.26 & 51.30$\pm$1.05 \\
      4 & 57.36$\pm$0.19 & \textbf{34.29$\pm$0.15} & \textbf{77.34$\pm$0.34} & 51.16$\pm$0.55 \\
      5 & \textbf{57.38$\pm$0.16} & 34.25$\pm$0.30 & 77.13$\pm$0.16 & 50.68$\pm$0.74 \\\bottomrule
    \end{tabular}}
\end{table*}
\newpage
\begin{table*}[h!]
    \centering
    \caption{The effectiveness evaluation of NaCl ($\M\neq 1$) on IntCl ($\M= 1$) when $\alpha=1, G^1=G^2=g_2$. The best performance within each loss type is in boldface.}
    \label{tab:sota}
    \scalebox{1}
    {\begin{tabular}{c|cccc}
    \toprule
    \multirow{2}{*}{$M$} & \multicolumn{4}{c}{$\alpha\neq0,~\mathcal{L}_{\text{NaCl}}(G^1, \M, \lambda)=\mathcal{L}_{\text{NCA}}(g_2, \M)$} \\
      &  CIFAR100 Acc. & FGSM Acc. & CIFAR10 Acc. & FGSM Acc. \\\midrule
      1 & 56.22$\pm$0.15 & 40.05$\pm$0.67 & 76.39$\pm$0.10 & \textbf{59.33$\pm$0.94}\\
      2 & 56.71$\pm$0.11 & 39.80$\pm$0.57 & 76.55$\pm$0.27 & 58.44$\pm$0.31\\
      3 & 57.13$\pm$0.26 & 40.53$\pm$0.29 & \textbf{76.67$\pm$0.22} & 58.47$\pm$0.31\\
      4 & 57.06$\pm$0.19 & 40.85$\pm$0.31 & 76.34$\pm$0.22 & 58.91$\pm$0.62\\
      5 & \textbf{57.46$\pm$0.04} & \textbf{41.00$\pm$0.86} & 76.60$\pm$0.37 & 57.98$\pm$0.47\\\midrule
      & \multicolumn{4}{c}{$\alpha\neq0,~\mathcal{L}_{\text{NaCl}}(G^1, \M, \lambda)=\mathcal{L}_{\text{MIXNCA}}(g_2, \M,0.5)$} \\\midrule
      1 & 56.22$\pm$0.15 & 40.05$\pm$0.67 & 76.39$\pm$0.10 & 59.33$\pm$0.94 \\
      2 & 58.97$\pm$0.19 & 40.25$\pm$0.52 & 78.61$\pm$0.20 & 58.41$\pm$0.59 \\
      3 & 59.26$\pm$0.18 & 40.96$\pm$0.58 & \textbf{78.83$\pm$0.22} & 59.20$\pm$1.25 \\
      4 & 59.32$\pm$0.21 & 40.82$\pm$0.54 & 78.83$\pm$0.27 & 59.03$\pm$0.52 \\
      5 & \textbf{59.43$\pm$0.23} & \textbf{41.01$\pm$0.34} & 78.80$\pm$0.21 & \textbf{59.51$\pm$0.93} \\\midrule 
      & \multicolumn{4}{c}{$\alpha\neq0,~\mathcal{L}_{\text{NaCl}}(G^1, \M, \lambda)=\mathcal{L}_{\text{MIXNCA}}(g_2, \M,0.6)$} \\\midrule
      1 & 56.22$\pm$0.15 & 40.05$\pm$0.67 & 76.39$\pm$0.10 & 59.33$\pm$0.94\\
      2 & 58.55$\pm$0.34 & \textbf{40.85$\pm$0.62} & 78.34$\pm$0.22 & \textbf{59.56$\pm$0.88}\\
      3 & 59.05$\pm$0.21 & 40.83$\pm$0.44 & 78.41$\pm$0.12 & 59.14$\pm$0.78\\
      4 & 59.06$\pm$0.25 & 40.80$\pm$0.89 & 78.61$\pm$0.22 & 58.41$\pm$1.00\\
      5 & \textbf{59.10$\pm$0.23} & 40.68$\pm$0.50 & \textbf{78.63$\pm$0.21} & 58.92$\pm$0.76 \\\midrule 
      & \multicolumn{4}{c}{$\alpha\neq0,~\mathcal{L}_{\text{NaCl}}(G^1, \M, \lambda)=\mathcal{L}_{\text{MIXNCA}}(g_2, \M,0.7)$} \\\midrule
      1 & 56.22$\pm$0.15 & 40.05$\pm$0.67 & 76.39$\pm$0.10 & 59.33$\pm$0.94 \\
      2 & 58.00$\pm$0.18 & 40.35$\pm$0.34 & 77.73$\pm$0.24 & 59.40$\pm$1.27\\
      3 & 58.23$\pm$0.18 & 40.94$\pm$0.75 & 77.91$\pm$0.25 & \textbf{59.57$\pm$0.81}\\
      4 & 58.20$\pm$0.25 & 40.95$\pm$0.45 & 77.89$\pm$0.20 & 59.49$\pm$0.49\\
      5 & \textbf{58.37$\pm$0.14} & \textbf{41.15$\pm$0.48} & \textbf{78.27$\pm$0.26} & 59.17$\pm$0.94\\\midrule 
      & \multicolumn{4}{c}{$\alpha\neq0,~\mathcal{L}_{\text{NaCl}}(G^1, \M, \lambda)=\mathcal{L}_{\text{MIXNCA}}(g_2, \M,0.8)$} \\\midrule
      1 & 56.22$\pm$0.15 & 40.05$\pm$0.67 & 76.39$\pm$0.10 & 59.33$\pm$0.94\\
      2 & 57.07$\pm$0.24 & 41.29$\pm$0.57 & 77.27$\pm$0.28 & 60.16$\pm$0.51\\
      3 & \textbf{57.62$\pm$0.22} & 40.93$\pm$0.49 & 77.54$\pm$0.27 & 59.47$\pm$0.52\\
      4 & 57.61$\pm$0.25 & \textbf{41.36$\pm$0.41} & 77.50$\pm$0.34 & \textbf{60.28$\pm$0.68}\\
      5 & 57.56$\pm$0.18 & 40.71$\pm$0.34 & \textbf{77.58$\pm$0.42} & 59.99$\pm$0.30\\\midrule
      & \multicolumn{4}{c}{$\alpha\neq0,~\mathcal{L}_{\text{NaCl}}(G^1, \M, \lambda)=\mathcal{L}_{\text{MIXNCA}}(g_2, \M,0.9)$} \\\midrule
      1 & 56.22$\pm$0.15 & 40.05$\pm$0.67 & 76.39$\pm$0.10 & 59.33$\pm$0.94  \\
      2 & 56.54$\pm$0.33 & 40.85$\pm$0.13 & 76.81$\pm$0.22 & 60.40$\pm$0.46\\
      3 & 56.69$\pm$0.11 & 41.23$\pm$0.66 & \textbf{76.98$\pm$0.22} & 60.13$\pm$0.56\\
      4 & 56.43$\pm$0.26 & \textbf{41.56$\pm$0.56} & 76.97$\pm$0.20 & \textbf{61.21$\pm$0.49}\\
      5 & \textbf{56.86$\pm$0.11} & 41.09$\pm$0.31 & 76.91$\pm$0.21 & 60.09$\pm$0.39\\\bottomrule
    \end{tabular}}
\end{table*}

\newpage
\section{Robust Accuracy}
For a more comprehensive study of adversarial robustness, we extend Table~\ref{tab:NCA_DN} to include PGD attack results with the same strength as FGSM attacks ($\epsilon=0.002$). One can readily see from Table~\ref{tab:IntCLonDN} that the robust accuracy under PGD attacks of the same magnitude is slightly lower (roughly 2-3\% lower) as PGD is a stronger attack. Nevertheless, the trend is consistent -- the models that exhibit better adversarial robustness w.r.t. FGSM attacks also demonstrate superior adversarial robustness w.r.t. PGD attacks.
\begin{table*}[h!]
    \centering
    \caption{The complete Table~\ref{tab:NCA_DN} (Table~1 right column) with additional PGD accuracy.}
    \label{tab:IntCLonDN}
    \scalebox{1}
    {\begin{tabular}{c|cccccc}
    \toprule
    \multirow{2}{*}{$\M$} & \multicolumn{6}{c}{$\alpha=0,~\mathcal{L}_{\text{NaCl}}(G^1, \M, \lambda)=\mathcal{L}_{\text{NCA}}(g_2, \M)$} \\
      &  CIFAR100 Acc. & FGSM Acc. & PGD Acc. & CIFAR10 Acc. & FGSM Acc. & PGD Acc. \\\midrule
      1 & 56.83$\pm$0.20 & 31.03$\pm$0.41 & 28.80$\pm$0.48 & 77.24$\pm$0.29 & 48.38$\pm$0.70 & \textbf{46.24$\pm$0.77}\\
      2 & 57.87$\pm$0.15 & 32.50$\pm$0.48 & 30.25$\pm$0.60 & 77.43$\pm$0.11 & 48.14$\pm$0.31 & 45.81$\pm$0.43\\
      3 & 58.42$\pm$0.23 & \textbf{33.19$\pm$0.60} & \textbf{30.93$\pm$0.59} & 77.41$\pm$0.17 & 48.09$\pm$0.93 & 45.67$\pm$0.93\\
      4 & \textbf{58.86$\pm$0.18} & 32.65$\pm$1.07 & 30.22$\pm$1.09 & 77.46$\pm$0.29 & \textbf{48.43$\pm$0.94} & 45.99$\pm$1.15\\
      5 & 58.81$\pm$0.21 & 32.86$\pm$0.47 & 30.57$\pm$0.55 & \textbf{77.58$\pm$0.23} & 48.30$\pm$0.39 & 45.80$\pm$0.48\\\midrule
      & \multicolumn{6}{c}{$\alpha=0,~\mathcal{L}_{\text{NaCl}}(G^1, \M, \lambda)=\mathcal{L}_{\text{MIXNCA}}(g_2, \M, 0.5)$} \\\midrule
      1 & 56.83$\pm$0.20 & 31.03$\pm$0.41 & 28.80$\pm$0.48 & 77.24$\pm$0.29 & 48.38$\pm$0.70 & 46.24$\pm$0.77 \\
      2 & 59.41$\pm$0.19 & \textbf{32.22$\pm$0.35} & \textbf{30.11$\pm$0.43} & 79.36$\pm$0.65 & 48.86$\pm$0.34 & 46.67$\pm$0.40\\
      3 & 59.81$\pm$0.25 & 32.04$\pm$0.67 & 29.87$\pm$0.65 & 79.41$\pm$0.17 & 48.91$\pm$0.81 & 46.61$\pm$0.86\\
      4 & 59.75$\pm$0.33 & 32.03$\pm$0.34 & 29.85$\pm$0.36 & 79.42$\pm$0.18 & \textbf{49.05$\pm$0.71} & \textbf{46.70$\pm$0.80}\\
      5 & \textbf{59.85$\pm$0.30} & 32.06$\pm$0.72 & 29.99$\pm$0.76 & \textbf{79.45$\pm$0.20} & 48.32$\pm$0.70 & 45.89$\pm$0.82\\
      \midrule 
      & \multicolumn{6}{c}{$\alpha=0,~\mathcal{L}_{\text{NaCl}}(G^1, \M, \lambda)=\mathcal{L}_{\text{MIXNCA}}(g_2, \M, 0.6)$} \\\midrule
      1 & 56.83$\pm$0.20 & 31.03$\pm$0.41 & 28.80$\pm$0.48 & 77.24$\pm$0.29 & 48.38$\pm$0.70 & 46.24$\pm$0.77 \\
      2 & 58.94$\pm$0.29 & 32.65$\pm$0.36 & 30.16$\pm$0.27 & 78.67$\pm$0.15 & \textbf{49.86$\pm$0.59} & \textbf{47.38$\pm$0.70}\\
      3 & 59.43$\pm$0.35 & 32.91$\pm$0.40 & 30.36$\pm$0.52 & 78.94$\pm$0.19 & 48.84$\pm$1.09 & 46.24$\pm$1.32\\
      4 & \textbf{59.54$\pm$0.28} & 33.02$\pm$0.62 & \textbf{30.68$\pm$0.72} & 78.92$\pm$0.29 & 49.64$\pm$0.74 & 47.15$\pm$0.88\\
      5 & 59.52$\pm$0.28 & \textbf{33.10$\pm$0.50} & 30.63$\pm$0.48 & \textbf{79.29$\pm$0.21} & 49.39$\pm$1.02 & 46.89$\pm$1.12\\\midrule
      & \multicolumn{6}{c}{$\alpha=0,~\mathcal{L}_{\text{NaCl}}(G^1, \M, \lambda)=\mathcal{L}_{\text{MIXNCA}}(g_2, \M, 0.7)$} \\\midrule
      1 & 56.83$\pm$0.20 & 31.03$\pm$0.41 & 28.80$\pm$0.48 & 77.24$\pm$0.29 & 48.38$\pm$0.70 & 46.24$\pm$0.77 \\
      2 & 58.24$\pm$0.19 & 33.24$\pm$0.90 & 30.40$\pm$1.06 & 78.30$\pm$0.31 & \textbf{50.40$\pm$0.83} & \textbf{47.50$\pm$0.89}\\
      3 & 58.74$\pm$0.26 & 33.12$\pm$0.59 & 29.94$\pm$0.62 & 78.49$\pm$0.30 & 49.85$\pm$0.38 & 46.69$\pm$0.32\\
      4 & 58.79$\pm$0.38 & \textbf{33.63$\pm$0.53} & \textbf{30.70$\pm$0.60} & 78.51$\pm$0.29 & 49.88$\pm$0.75 & 47.01$\pm$0.96\\
      5 & \textbf{58.99$\pm$0.18} & 32.93$\pm$0.81 & 29.89$\pm$0.99 & \textbf{78.57$\pm$0.12} & 49.53$\pm$1.55 & 46.41$\pm$1.91\\\midrule 
      & \multicolumn{6}{c}{$\alpha=0,~\mathcal{L}_{\text{NaCl}}(G^1, \M, \lambda)=\mathcal{L}_{\text{MIXNCA}}(g_2, \M, 0.8)$} \\\midrule
      1 & 56.83$\pm$0.20 & 31.03$\pm$0.41 & 28.80$\pm$0.48 & 77.24$\pm$0.29 & 48.38$\pm$0.70 & 46.24$\pm$0.77 \\
      2 & 57.60$\pm$0.15 & 34.14$\pm$0.22 & 31.35$\pm$0.25 & \textbf{77.96$\pm$0.07} & \textbf{51.82$\pm$0.68} & \textbf{48.81$\pm$0.85}\\
      3 & 58.04$\pm$0.28 & 33.93$\pm$0.45 & 31.31$\pm$0.62 & 77.55$\pm$0.18 & 50.30$\pm$0.81 & 47.41$\pm$0.76\\
      4 & 58.05$\pm$0.16 & \textbf{34.16$\pm$0.54} & \textbf{31.41$\pm$0.61} & 77.90$\pm$0.21 & 50.40$\pm$0.43 & 47.58$\pm$0.47\\
      5 & \textbf{58.43$\pm$0.27} & 33.87$\pm$0.62 & 31.23$\pm$0.76 & 77.90$\pm$0.17 & 50.78$\pm$0.95 & 47.96$\pm$1.12\\\midrule
      & \multicolumn{6}{c}{$\alpha=0,~\mathcal{L}_{\text{NaCl}}(G^1, \M, \lambda)=\mathcal{L}_{\text{MIXNCA}}(g_2, \M, 0.9)$} \\\midrule
      1 & 56.83$\pm$0.20 & 31.03$\pm$0.41 & 28.80$\pm$0.48 & 77.24$\pm$0.29 & 48.38$\pm$0.70 & 46.24$\pm$0.77 \\
      2 & 57.16$\pm$0.15 & 34.25$\pm$0.55 & 31.83$\pm$0.57 & 77.19$\pm$0.09 & \textbf{51.42$\pm$0.45} & \textbf{49.09$\pm$0.53}\\
      3 & 57.08$\pm$0.10 & 33.96$\pm$0.19 & 31.56$\pm$0.34 & 77.21$\pm$0.26 & 51.30$\pm$1.05 & 48.60$\pm$1.28\\
      4 & 57.36$\pm$0.19 & \textbf{34.29$\pm$0.15} & \textbf{31.93$\pm$0.32} & \textbf{77.34$\pm$0.34} & 51.16$\pm$0.55 & 48.64$\pm$0.61\\
      5 & \textbf{57.38$\pm$0.16} & 34.25$\pm$0.30 & 31.89$\pm$0.26 & 77.13$\pm$0.16 & 50.68$\pm$0.74 & 48.14$\pm$0.83\\
      \bottomrule
    \end{tabular}}
\end{table*}

In Figure~\ref{fig:robust_epsilon}, we show the robust accuracy as a function of the FGSM attack strength $\epsilon$. Specifically, we range the attack strength from $0.002$ to $0.032$ and give the robust accuracy of our proposals (IntCl \& IntNaCl) together with baselines under all attacks. From Figure~\ref{fig:robust_epsilon}, one can see that among all baselines, Adv demonstrates the best adversarial robustness, whereas our proposals still consistently win over it by a noticeable margin.
\begin{figure}[h!]
    \centering
    \includegraphics[height=6cm]{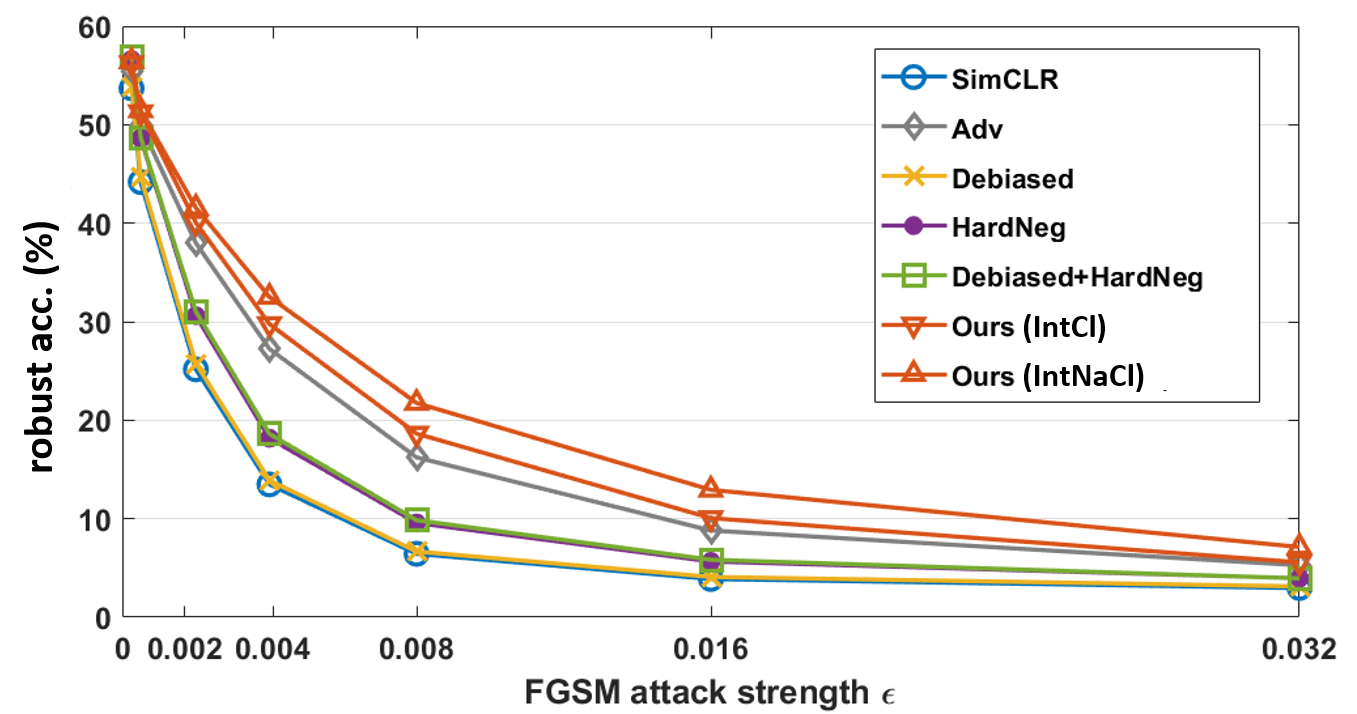}
    \caption{The robust accuracy under FGSM attacks of different strength on CIFAR100.}
    \label{fig:robust_epsilon}
\end{figure}

\newpage
\section{The Effect of $\lambda$}
\begin{figure*}[h!]
    \centering
    \begin{subfigure}[NaCl on SimCLR~\cite{chen2020simple}, i.e. $\alpha=0, \mathcal{L}_{\text{NaCl}}=\mathcal{L}_{\text{MIXNCA}}, G^1=g_0$ in Eq.~\eqref{eqn:IntNCACL}]{\includegraphics[width=\textwidth]{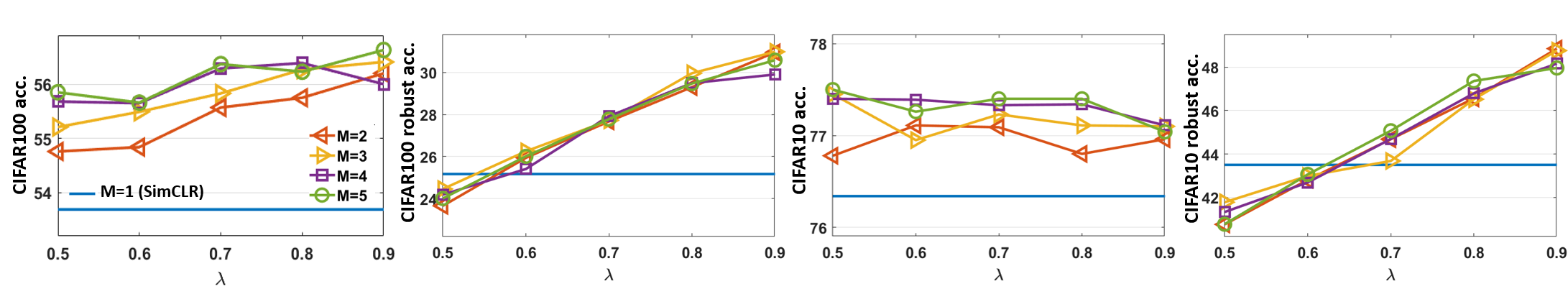}
    \label{fig:ori_tbl1}}
    \end{subfigure}
    \begin{subfigure}[NaCl on Debiased+HardNeg~\cite{robinson2021contrastive}, i.e. $\alpha=0, \mathcal{L}_{\text{NaCl}}=\mathcal{L}_{\text{MIXNCA}}, G^1=g_2$ in Eq.~\eqref{eqn:IntNCACL}]{\includegraphics[width=\textwidth]{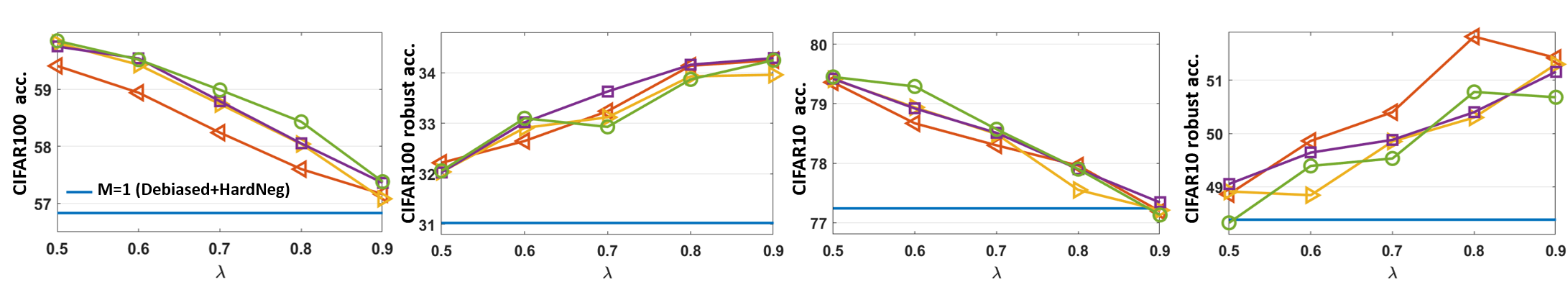}
    \label{fig:ori_tbl2}}
    \end{subfigure}
        \caption{The standard and robust accuracy (\%) on CIFAR100 and CIFAR10 as functions of $\lambda$ in Eq.~\eqref{eqn:IntNCACL} when $\alpha=0,\mathcal{L}_{\text{NaCl}}=\mathcal{L}_{\text{MIXNCA}}$.  }
    \label{fig:MIXNCA_lam}
\end{figure*}
%

\newpage
\section{Extended Runtime}
As training the representation with more epochs can also expose the data to more augmentations, we carry out an additional experiments to compare the efficiency and ultimate accuracy of $\mathcal{L}_{\text{NaCl}}$, $\mathcal{L}_{\text{SimCLR}}$, and $\mathcal{L}_{\text{Debiased+HardNeg}}$.
In Table~\ref{tab:more_epochs_}, we give the standard accuracy of NaCl on SimCLR and NaCl on Debiased+HardNeg at different epochs. Same as before, we only select one $\lambda$ when $\mathcal{L}_{\text{NaCl}}=\mathcal{L}_{\text{MIXNCA}}$ and report its results together with those of $\mathcal{L}_{\text{NaCl}}=\mathcal{L}_{\text{NCA}}$.
In Figure~\ref{fig:more_epochs}, we plot the best standard accuracy achieved as a function of training epochs.
Specially,~\cite{haochen2021provable} has reported a $\mathcal{L}_{\text{SimCLR}}$ CIFAR100 accuracy of 54.74\% after 200 epochs, compared to $\mathcal{L}_{\text{NCA}}(g_0,2)$’s 55.72\% after 100 epochs.
In our reproduction of the $\mathcal{L}_{\text{SimCLR}}$ 200-epoch result\footnote{We let the dataloader shuffle the whole dataset to form new batches after every epoch, so by doubling the training epoch, one will effectively expose the network to more diverse negative pairs.}, we have witnessed an accuracy of 57.45\% however at the cost of 1.34X training time (cf. 200 epochs with $\mathcal{L}_{\text{SimCLR}}$ takes 211 mins vs. 100 epochs with $\mathcal{L}_{\text{NCA}}(g_0,2)$ takes 158 mins).
Overall, we see that NaCl methods demonstrate better efficiency when applying on SimCLR and better ultimate accuracy when applying on Debiased+HardNeg.
\begin{table}[th]
    \centering
    \scalebox{0.9}
    {
    \begin{tabular}{cccccccccccc}
    \#epoch & 100 & 200 & 400 & 600 & 800 & 1000 & 1200 & 1400 & 1600 & 1800 & 2000\\\hline
    $\mathcal{L}_{\text{SimCLR}}$ & 53.69 & 57.45 &	60.06	& 60.96 & 61.27	& 61.90 & 61.94 & 62.53 & 62.44 & 62.10 & 62.06 \\
    $\mathcal{L}_{\text{NCA}}(g_0,2)$ & 55.72 & 59.31  & 61.19 & 61.66 & 62.49 & 61.95 & 62.06 & 62.39 & 62.39 & 62.52 & 62.54\\
    $\mathcal{L}_{\text{MIXNCA}}(g_0,2,0.9)$ & 56.20 & 58.98	& 61.81 & 62.43 & 62.46 & 63.48 & 63.48 & 64.13 & 64.14 & 64.21 & 64.31\\\hline
    $\mathcal{L}_{\text{Debiased+HardNeg}}$ & 56.83 & 59.35 & 61.77 &	62.74	& 62.68 & 63.12 & 63.22 & 63.08 & 62.86 & 62.90 & 63.38\\
    $\mathcal{L}_{\text{NCA}}(g_2,2)$ & 57.87 & 60.06 & 62.36 & 62.58 & 62.86 & 63.07 & 63.29 & 63.65 & 63.13 & 63.73 & 63.20\\
    $\mathcal{L}_{\text{MIXNCA}}(g_2,2,0.5)$ & 59.41 & 62.14	& 64.06 & 65.59 & 65.53 & 66.29 & 66.64 & 67.14 & 66.94 & 67.53 & 67.85\\
    \end{tabular}}
    \caption{The CIFAR100 linear evaluation results (\%) after different numbers of training epochs.}
    \label{tab:more_epochs_}
\end{table}
\begin{figure}[h!]
    \centering
    \begin{subfigure}[NaCl on SimCLR~\cite{chen2020simple}]{
    \includegraphics[height=4cm]{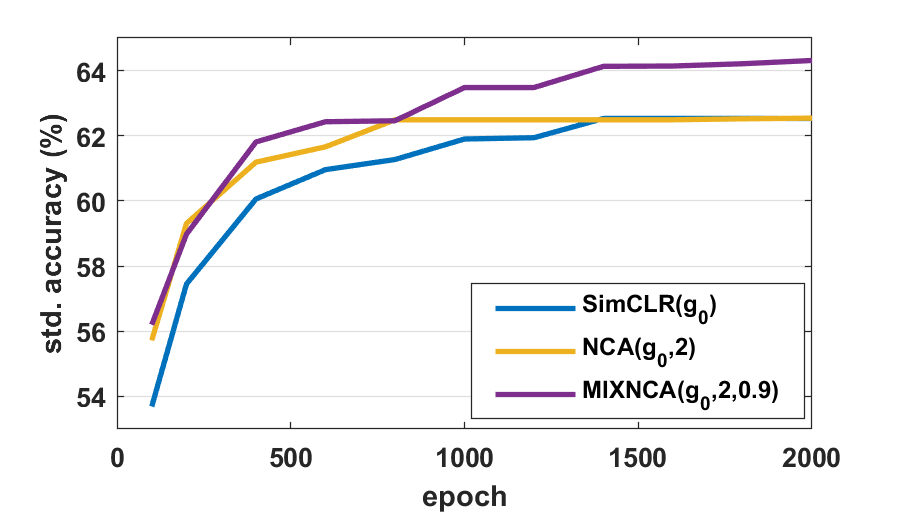}}
    \end{subfigure}
    \begin{subfigure}[NaCl on Debiased+HardNeg~\cite{robinson2021contrastive}]{
    \includegraphics[height=4cm]{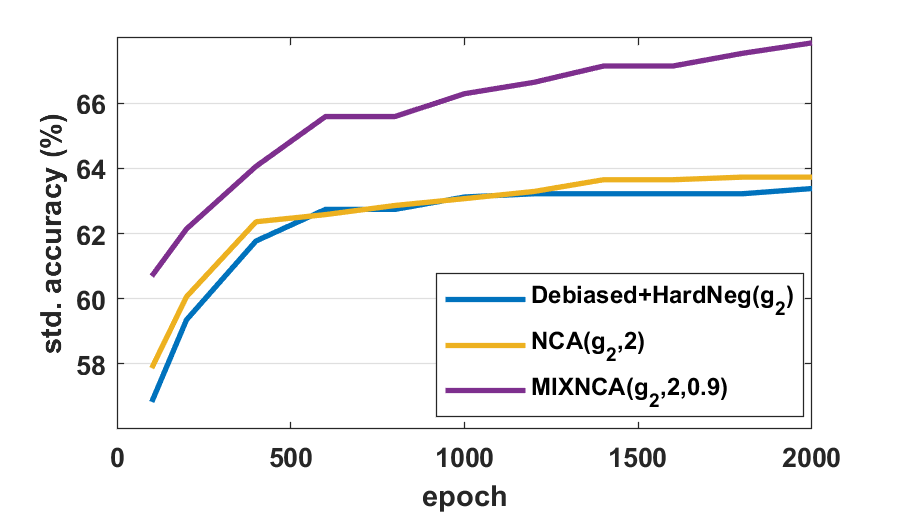}}
    \end{subfigure}
    \caption{The standard accuracy (\%) on CIFAR100 with extended runtime.}
    \label{fig:more_epochs}
\end{figure}

\newpage
\section{Experimental Details}
\paragraph{Architecture.} We follow~\cite{chen2020simple,robinson2021contrastive} to incorporate an MLP projection head during the contrastive learning on resnet18.
\paragraph{Optimizer.} Adam optimizer with a learning rate of $3e-4$.
\paragraph{Training epochs.} The representation network is trained for 100 epochs. For CIFAR100 and CIFAR10, the downstream fully-connected layer is trained for 1000 epochs. For TinyImagenet, the fully-connected layer is trained for 200 epochs.
\paragraph{Methodological hyperparameters.} Throughout out experiments, we use $\tau^+=0.01$ and $\beta=1.0$ for $\mathcal{L}_{\text{Debiased}}$~\cite{chuang2020debiased} and $\mathcal{L}_{\text{Debiased+HardNeg}}$~\cite{robinson2021contrastive}, $\alpha=1$ for $\mathcal{L}_{\text{Adv}}$~\cite{ho2020contrastive}. The same set of hyperparameters are used in our IntCl and IntNaCl.
\paragraph{Data augmentation.} Our data augmentation includes random resized crop, random horizontal flip, random grayscale, and color jitter. Specifically, we implement the color jitter by calling $torchvision.transforms.ColorJitter(0.8 * s, 0.8 * s, 0.8 * s, 0.2 * s)$ and execute with probability $0.8$. Random grayscale is performed with probability $0.2$.
\paragraph{Adversarial hyperparameters.} When evaluating the adversarial robustness using the codebase provided in~\cite{Wong2020Fast}, we use a PGD step size of $1e-2$, $10$ iterations, and $2$ random restarts. 
\paragraph{Error bar.} We run five independent trials for each of the experiments and report the mean and standard deviation for all tables and figures. The error bars in Figure~\ref{fig:robust_epsilon} is omitted for better visual clarity.

\newpage
\section{Supervised Learning Baseline}
We give in the following the standard and robust accuracy of a supervised learning baseline with the same network architecture, optimizer, and batch size. In our self-supervised representation learning experiments, we train the representation network for 100 epochs and train the downstream fully-connected classifying layer for 1000 epochs. Therefore, to obtain a fair supervised learning baseline, we train the complete network end-to-end for 1000 epochs. We follow the same procedures in evaluating the transfer standard accuracy and robust accuracy as described in Section~\ref{sec:exp}.

CIFAR100 (std. acc., FGSM acc., PGD acc.): 65.16$\pm$0.32, 35.89$\pm$0.23, 32.62$\pm$0.23.

Transfer CIFAR10 (std. acc., FGSM acc., PGD acc.): 77.45$\pm$0.21, 44.39$\pm$0.47, 40.35$\pm$0.52.


\end{document}